\theoremstyle{plain}
\newtheorem{theorem}{Theorem}[section]
\newtheorem{proposition}[theorem]{Proposition}
\newtheorem{lemma}[theorem]{Lemma}
\theoremstyle{definition}
\newtheorem{definition}[theorem]{Definition}
\newtheorem{assumption}[theorem]{Assumption}
\theoremstyle{remark}
\newtheorem{condition}[theorem]{Condition} %newly added in 2025.1
\newcommand{\bE}{\mathbb{E}}
\newcommand{\bH}{\mathbb{H}}
\newcommand{\bP}{\mathbb{P}}
\definecolor{myblue}{RGB}{0,0,255}
\newcommand{\A}{{\mathcal{A}}}
\newcommand{\U}{{\mathcal{U}}}
\newcommand{\bbH}{{\mathcal{H}}}
\newcommand{\R}{{\mathcal{R}}}
\newcommand{\bbS}{{\mathbf{S}}}
\newcommand{\bbE}{{\mathcal{E}}}
\newcommand{\bone}{{\bold{1}}}
\newcommand{\N}{{\mathcal{N}}}
\newcommand{\V}{{\mathcal{V}}}
\newcommand{\bbY}{{\mathcal{Y}}}
\title{Design-Based Bandits Under Network Interference: Trade-Off Between Regret and Statistical Inference}
\author{
Zichen Wang\textsuperscript{1}\thanks{Equal contribution} \And
Haoyang Hong\textsuperscript{2}\footnotemark[1] \And
Chuanhao Li\textsuperscript{3} \And
Haoxuan Li\textsuperscript{4} \And
Zhiheng Zhang\textsuperscript{5}\thanks{Corresponding author} \And
Huazheng Wang\textsuperscript{2} \\
\\
\textsuperscript{1}Department of ECE and CSL, UIUC \\
\textsuperscript{2}School of EECS, Oregon State University \\
\textsuperscript{3}Department of Industrial Engineering, Tsinghua University \\
\textsuperscript{4}Center for Data Science, Peking University  \\
\textsuperscript{5}School of Statistics and Data Science, 
Shanghai University of Finance and Economics,\\ Shanghai 200433, P.R. China
}
\begin{document}

\maketitle

\begin{abstract}
In multi-armed bandits with network interference (MABNI), the action taken by one node can influence the rewards of others, creating complex interdependence. While existing research on MABNI largely concentrates on minimizing regret, it often overlooks the crucial concern that an excessive emphasis on the optimal arm can undermine the inference accuracy for sub-optimal arms. Although initial efforts have been made to address this trade-off in single-unit scenarios, these challenges have become more pronounced in the context of MABNI. In this paper, we establish, for the first time, a theoretical Pareto frontier characterizing the trade-off between regret minimization and inference accuracy in adversarial (design-based) MABNI. We further introduce an anytime-valid asymptotic confidence sequence along with a corresponding algorithm, $\texttt{EXP3-N-CS}$, specifically designed to balance the trade-off between regret minimization and inference accuracy in this setting.
\end{abstract}

\section{Introduction}

Network interference \citep{leung2022causal, leung2022rate, leung2023network, imbens2024causal}, a well-known concept in causal inference, describes a phenomenon where the treatment assigned to one individual can influence the outcomes of others. It has been extensively studied across various disciplines, with significant applications in economics~\citep{arpino2016assessing, munro2021treatment} and the social sciences~\citep{bandiera2009social, bond201261, paluck2016changing, imbens2024causal}. Due to its broad real-world relevance, this concept in causal inference has recently been explored and recognized by researchers in online learning. Consequently, it has begun to be frequently applied in multi-armed bandits~\citep{agarwal2024multi, jia2024multi, zhang2024online}. 

To effectively identify causal effects under network interference, a common approach involves conducting randomized experiments to estimate causal effects from experimental data~\citep{leung2022causal, leung2022rate, leung2023network, gao2023causal}. Specifically, researchers design estimators that leverage feedback collected from each individual (commonly referred to as \textit{potential outcomes} in the causal inference literature). They primarily focus on ensuring unbiasedness and controlling the variance of these estimators. However, in practice, such experiments are often conducted over multiple rounds, introducing a dynamic aspect to individual feedback. In this setting, the aforementioned \textit{potential outcomes} are also referred to as \textit{rewards} in the online learning literature, as they contribute to cumulative regret, which quantifies the overall welfare loss incurred throughout the experiment~\citep{simchi2023multi}. Once the experiment concludes, data collected in earlier rounds can be utilized to improve social welfare in future applications~\citep{mok2021managing}. For instance, when evaluating the effectiveness of different drug treatments, researchers may not only seek to maximize treatment efficacy during the trial but also estimate the relative differences in treatment effects across drugs based on experimental data. This process necessitates a careful balance between optimizing the \emph{estimation accuracy} of causal effects and minimizing the \emph{cumulative regret} incurred during the experiment~\citep{simchi2023multi, zhang2024online}. Furthermore, researchers may wish to continuously infer causal effects throughout the experiment, allowing them to make informed decisions about when to stop based on data-driven metrics or predefined thresholds~\citep{ham2023design, woong2023design, liang2023experimental}. This type of continual inference often requires \emph{anytime-validity}, ensuring that statistical inferences remain robust regardless of the time at which they are made~\citep{lindon2022anytime, waudby2024anytime}.

\begin{wrapfigure}{r}{0.45\textwidth}
\centering
\includegraphics[width=0.45\textwidth]{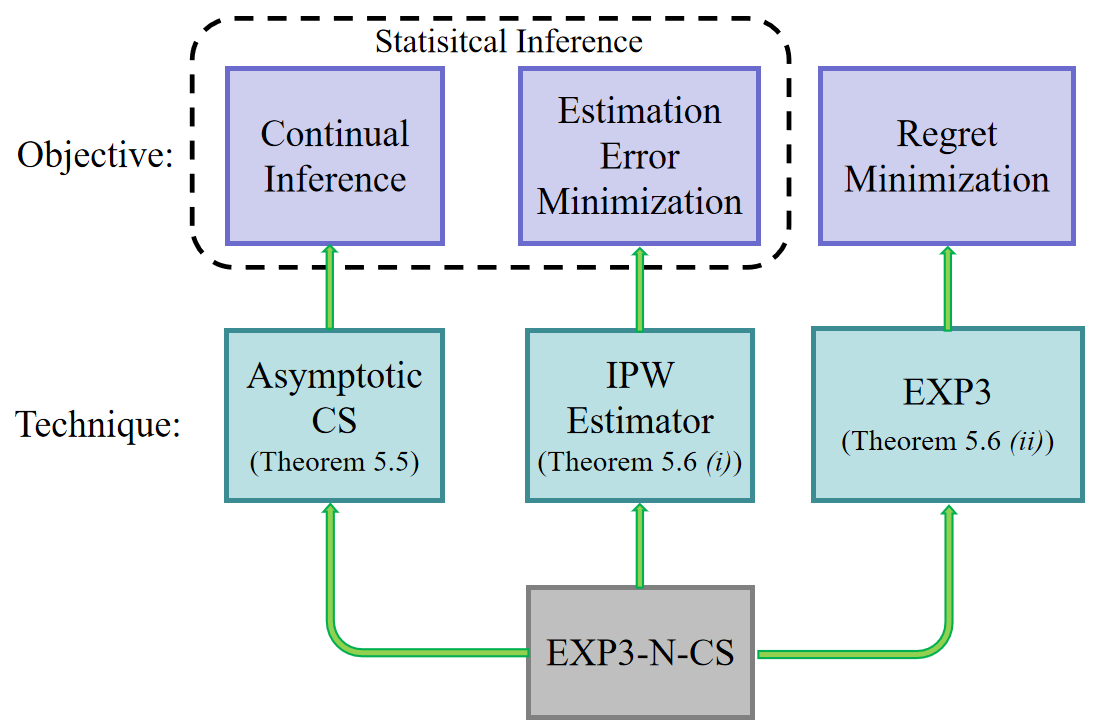}
    \caption{The main contribution of our paper is to study how to achieve these three objectives and to analyze their underlying interrelationships.}
    \label{fig:enter-label}
\end{wrapfigure}

Building on the above observations, three critical learning objectives emerge: (i) conducting continual inference on causal effects, (ii) minimizing cumulative regret, and (iii) designing estimators that leverage collected data to accurately estimate causal effects once the experiment concludes. However, most existing studies fail to address these three objectives simultaneously. For instance, \citet{jia2024multi, agarwal2024multi} focus exclusively on regret minimization in MABNI, whereas \citet{ham2023design, woong2023design} primarily explore continual inference in adversarial MAB using the technique of Asymptotic \textbf{C}onfidence \textbf{S}equences (CS)~\citep{waudby2021time}. Similarly, \citet{simchi2023multi, zhang2024online, duan2024regret} investigate the trade-off between regret minimization and causal effect estimation but place little emphasis on continual inference. The most closely related work is \citet{liang2023experimental}, which considers all three learning objectives within the framework of adversarial MAB. However, their approach suffers from two key limitations: (i) it does not account for network interference, and (ii) it lacks rigorous theoretical results characterizing the trade-off between causal effect estimation and regret minimization. Building on the above observations, we aim to achieve three key learning objectives in adversarial MABNI and make the following contributions:

\begin{itemize}
    \item We propose a unified learning framework tailored for the adversarial setting, which we term adversarial \texttt{MAB-N}. Furthermore, we establish the first Pareto frontier that delineates the fundamental trade-off between regret and causality-estimation error in adversarial \texttt{MAB-N}.
    \item We develop an anytime-valid asymptotic CS to enable continuous inference in adversarial \texttt{MAB-N}. Building on this, we introduce \texttt{EXP3-Network-Confidence Sequence} (\texttt{EXP3-N-CS}), which integrates our asymptotic CS and is specifically designed to achieve all three learning objectives (i)–(iii).
    \item We conduct simulation studies to investigate the empirical performance of our \texttt{EXP3-N-CS}.
\end{itemize}

\begin{table}[t]
\centering
\caption{The overview of the exploration of these three objectives: Obj. 1 represents regret minimization, Obj. 2 represents continual inference, and Obj. 3 corresponds to minimizing the ATE estimation error.}
\label{tab:five_columns}
\begin{tabular}{p{1.7in}cccccc}
\toprule
Paper & Obj. 1 & Obj. 2 & Obj. 3 & Trade-off & Network & Adversarial \\
\midrule
\citet{simchi2023multi}    & $\checkmark$  & $\times$   & $\checkmark$   & $\checkmark$ & $\times$ & $\times$  \\
\citet{woong2023design,ham2023design}    & $\times$   &  $\checkmark$   &  $\checkmark$  & $\times$  & $\times$  & $\checkmark$ \\
\citet{liang2023experimental}    &  $\checkmark$  & $\checkmark$ & $\checkmark$ & $\times$ & $\times$ & $\checkmark$ \\
\citet{jia2024multi}    & $\checkmark$  & $\times$   & $\times$ & $\times$ & $\checkmark$  & $\checkmark$   \\
\citet{agarwal2024multi,xu2024linear} & $\checkmark$ & $\times$ & $\times$ & $\times$ & $\checkmark$ & $\times$\\
\citet{zhang2024online}    & $\boldsymbol{\checkmark}$   & $\times$   & $\checkmark$   & $\checkmark$ & $\checkmark$ & $\times$\\ \textbf{Ours} & $\textbf{\checkmark}$ & $\textbf{\checkmark}$ & $\textbf{\checkmark}$ & $\textbf{\checkmark}$ & $\textbf{\checkmark}$ & $\textbf{\checkmark}$ \\
\bottomrule
\end{tabular}
\end{table}

The paper is organized as follows. Section~\ref{related-work} reviews related work. In Section~\ref{framework}, we introduce the core setting of adversarial MABNI, along with the techniques of exposure mapping and clustering, which support the design of \texttt{MAB-N}. Section~\ref{4} presents our results on Pareto optimality. In Section~\ref{5}, we introduce the Asymptotic CS technique and our main algorithm, \texttt{EXP3-N-CS}. Finally, Section~\ref{experiments} reports the experimental results.

\section{Related Work}
\label{related-work}

\setlength{\tabcolsep}{3pt}

\paragraph{Causality inference under network interference.}

In the current causality literature, interference is a well-established concept that signifies a violation of the Stable Unit Treatment Value Assumption (SUTVA)~\citep{imbens2024causal}. It arises in scenarios where an individual's treatment potentially influences the outcomes of others, a phenomenon frequently observed in practice. Existing research on offline causal inference under network interference has primarily employed two key methodological approaches: \emph{clustering-based methods}~\citep{zhang2023individualized, viviano2023causal, zhao2024simple} and \emph{exposure mapping techniques}~\citep{leung2022causal, leung2022rate, leung2023network,zhao2024simple}. Recently, a growing interest has been in studying the MABNI. For instance, \citet{agarwal2024multi} applied Fourier analysis to transform the MABNI problem into a sparse linear stochastic bandit formulation. However, to mitigate the exponential growth of the action space, they imposed a strong sparsity assumption on network structures, restricting the number of neighbors each node can have. In contrast, \citet{jia2024multi} explored an MABNI setting without such a sparsity assumption. Their learning framework enforces a switchback design, in which all nodes must adopt the same arm simultaneously. However, this approach does not account for scenarios in which the optimal arm may vary between nodes or subgroups. To address these limitations, \citet{zhang2024online} proposed a general learning framework, \texttt{MAB-N}, which simplifies the stochastic MABNI problem while allowing flexible adjustment of the action space through exposure mapping and clustering techniques. \texttt{MAB-N} generalizes the settings considered in \citet{jia2024multi} and \citet{agarwal2024multi}, treating them as special cases (see discussion in Section \ref{sec_mab-N}). Furthermore, \citet{xu2024linear} extended MABNI to the linear contextual bandit setting, incorporating a structured linear relationship between potential outcomes and interference intensity.

\paragraph{Trade-off between inference and regret.} A substantial body of research has focused on developing statistical methods for inference in stochastic MAB, often deriving statistical tests or central limit theorems while keeping the bandit algorithm largely unchanged~\citep{luedtke2016statistical, dimakopoulou2017estimation, dimakopoulou2019balanced, zhang2020inference, dimakopoulou2021online, hadad2021confidence, zhang2021statistical, han2022online, deshpande2023online, simchi2023multi}. These methods enable aggressive regret minimization but are subject to several key limitations: (i) they rely on the SUTVA, (ii) they assume that bandit rewards are i.i.d. samples from specific distribution families, and (iii) they do not support anytime-valid continual inference. Regarding the last limitation, to our knowledge, the only works that attempt continual inference in the adversarial bandit setting are~\citet{ham2023design, woong2023design, liang2023experimental}. However, these studies also assume SUTVA and lack a rigorous theoretical analysis of the inference-regret trade-off. To explore this inference-regret trade-off, researchers have first shifted their attention to a simpler problem: balancing estimation accuracy and regret minimization. To our knowledge, the first rigorous trade-off results were provided by \citet{simchi2023multi} in the stochastic MAB setting, though their approach remains constrained by the SUTVA assumption. \citet{duan2024regret} further argues that Pareto optimality—simultaneously achieving optimal regret and estimation accuracy—can be improved by introducing a covariate diversity assumption, provided that there is no network interference. More recently, \citet{zhang2024online} extended the trade-off results from \citet{simchi2023multi} to the stochastic \texttt{MAB-N} framework, accommodating network interference.

Additional discussions of the related work are provided in the Appendix. The relationship between our work and the most closely related studies is summarized in Table~\ref{tab:five_columns}.

\section{Preliminaries}\label{framework}

In this section, we first introduce the basic adversarial MABNI framework. Then, we present the techniques of exposure mapping and clustering and outline the adversarial \texttt{MAB-N} framework.

\subsection{Basic framework: adversarial MABNI}\label{MABNI}

We extend the classic single-unit adversarial bandit framework \citep{auer2002nonstochastic} to incorporate network interference \citep{zhang2024online}. Consider a network with $N$ units, represented by the set 
$\mathcal{U} = \{1,\dots,N\}$ and the adjacency matrix $ \mathbb{H} := \{h_{i,j}\}_{i,j\in[N]} \in \{0,1\}^{N \times N}$ (where $h_{i,j} = 1$ indicates unit $i$ and $j$ are neighbors, while $h_{i,j} = 0$ indicates otherwise). It is worth noting that full knowledge of $\bH$ is not strictly required; its necessity depends on the specific design introduced in the following section (see the discussion in Section \ref{sec_mab-N}). We assume that each unit has a $K$-armed set (action set) denoted as $\mathcal{K} = \{0,1, 2, \ldots, K-1\}$. At each round $t$, the learner must assign an arm to each unit, resulting in a super arm (a collection of arms across all units) represented as $A_t = (a_{1,t}, a_{2,t}, \ldots, a_{N,t}) \in \mathcal{K}^\mathcal{U}$. Suppose the super arm $A_t$ is pulled in round $t$, the reward derived by unit $i \in \mathcal{U}$ is $Y_{i,t}(A_t) \in [0,1]$, where $Y_{i,t}(\cdot): \mathcal{K}^\U \rightarrow \mathbb{R}$ represents the reward function of unit $i$ in round $t$. The terminal time $T$ is not pre-specified and cannot be known to the learner in advance. We define the set of all legitimate design-based bandit instances as $\bbE_0$, where a legitimate instance $\nu:= \{Y_{i,t}(A)\}_{A\in\mathcal{K}^\U,i\in\U,t\in[T]}$ satisfies $Y_{i,t}(A) \in [0,1]$ for all $A\in\mathcal{K}^\U$, $t\in[T]$ and $i\in\U$.

We aim to design a policy $\pi := (\pi_1, \dots, \pi_T)$. The $\pi_t$ is a rule that determines the super arm pulled in round $t$ based on the history $\bbH_{t-1}$:= $\{A_1, \{ Y_{i,1}(A_1) \}_{i\in\U}, \dots, A_{t-1}, \{ Y_{i,t-1}(A_{t-1}) \}_{i\in\U}\}$. Specifically, $\pi_t(A) = \bP(A_t = A \mid \bbH_{t-1})$. The performance of the policy is commonly measured by the cumulative regret \citep{auer2002nonstochastic,lattimore2020bandit}, defined as 
\begin{align}
\nonumber
  \mathcal{R}(T, \pi):=\max_{A \in \mathcal{K}^\mathcal{U}} \sum_{t=1}^T   \frac{1}{N} \sum_{i\in\U} Y_{i,t}(A) 
    -
    \mathbb{E}_\pi\biggl[
      \sum_{t=1}^T \frac{1}{N} \sum_{i\in\U} Y_{i,t}(A_t) 
    \biggr].       
\end{align}
The above-mentioned problem is far more challenging than the simple MAB problem (which only involves $K$ arms), as it involves $K^N$ possible super-arms, increasing the action space exponentially. As shown by \citet{zhang2024online} (see their Proposition 1), in certain difficult situations, any valid policy $\pi$ will incur regret that grows linearly with the time horizon $T$, i.e., $\mathcal{R}(T, \pi) = \Omega(T)$. To manage this complexity, we adopt the method of \citet{zhang2024online}, employing two key techniques: exposure mapping and clustering \citep{leung2022causal, zhang2024online} to reduce the effective dimensionality of the action space. These techniques enable the formulation of a unified framework, \texttt{MAB-N}, which captures a broad spectrum of learning settings.

\subsection{\texttt{MAB-N}}\label{sec_mab-N}

\paragraph{Exposure mapping \citep{leung2022causal}}
Exposure mapping is a common tool in causal inference for network interference that reduces the complexity of treatment assignments in networked settings. The core idea of exposure mapping is to compress these high-dimensional features of neighbors and network structure into a smaller set of exposure categories. Instead of labeling each individual as merely “treated” or “untreated,” we assign them an exposure level that reflects the degree or type of influence they experience, such as “having two treated neighbors” or “having at least one treated neighbor.” The definition of the exposure mapping follows \citep{leung2022causal,zhang2024online}:
\begin{equation}
s \equiv \bbS(i, A, \mathbb{H}), \text{~where~}
\bbS: \mathcal{U} \times \mathcal{K}^{\U} \times \{0,1\}^{N \times N} \rightarrow \U_s, \label{exposure_def}
\end{equation}
where $s$ denotes the exposure arm, $\bbS$ the exposure mapping, 
and $\U_{s}$ its output space, referred to as the exposure-arm set, 
with cardinality $|\mathcal{U}_{s}| = d_{s}$. Intuitively, the exposure mapping reduces the original super arm space of size $K^N$ to an exposure arm space of size $d_s$. We define $S = \{\textbf{S}(i,A,\mathbb{H})\}_{i \in \U} \equiv (s_{1},...,s_{N})$ as the \textit{exposure super arm}. This allows us to decompose the policy $\pi_t(\cdot)$ and define the expected exposure mapping-based reward:
\begin{equation}\label{definitionpolicy}
\begin{aligned}
 &{\pi_t(A)} \equiv  {\mathbb{P}_{}(A_t = A \mid S_t) } {{\mathbb{P}_{}}({S_t} \mid \bbH_{t-1})},\quad \Tilde{Y}_{i,t}({S}_{t}) := \sum_{A \in \mathcal{K}^{\U}} Y_{i,t}(A) \mathbb{P}_{}(A_t =A \mid {S}_t).
\end{aligned}
\end{equation}
Here $S_{t}$ denotes the exposure super arm selected by the algorithm in round~$t$ 
based on the history~$\mathcal{H}_{t-1}$, 
and the policy $\pi_{t}(A)$ is represented by a two-stage sampling procedure: 
it first draws $S_{t}\sim\mathbb{P}(\cdot\mid\mathcal{H}_{t-1})$ 
and then samples $A_{t}\sim\mathbb{P}(\cdot\mid S_{t})$. The second line of the above equation is a generalized notation of~\citet{leung2022causal}. Notably, $\mathbb{P}(A_t = A \mid S)$ represents a fixed sampling rule that can be manually defined by the learner before the learning starts. Typically, the probability of selecting \( A_t = A \) given \( S \) is zero if \( S \) does not match the set \( \{\textbf{S}(i,A,\mathbb{H})\}_{i \in \U} \). Conversely, if \( S \) is equal to this set, then the probability of choosing \( A \) is strictly positive, i.e., \( \mathbb{P}(A_t = A \mid S) > 0 \). In this context, the expected reward of $S$ (i.e., $\Tilde{Y}_{i,t}(S)$) depends solely on the definition of the exposure mapping $\bbS$ and the network topology $\mathbb{H}$. 

\paragraph{Clustering.}
We define the clustering set as $\mathcal{C} := \{\mathcal{C}_q\}_{q \in [C]}$, where $C = |\mathcal{C}|$ represents the total number of clusters. The clusters are assumed to be disjoint, meaning that for any $i \neq j$, $\mathcal{C}_i \cap \mathcal{C}_j = \varnothing$, and collectively exhaustive, such that $\bigcup_{q \in [C]} \mathcal{C}_q = [N]$. For any $i \in [N]$, we denote $\mathcal{C}^{-1}(i)$ as the cluster containing $i$. Such an operation is common and necessary, otherwise, the total arm space is exponentially large.

\paragraph{Framework of \texttt{MAB-N}.} We define the legitimate exposure super arm set as $\U_\bbE := \U_\mathcal{C} \cap \U_\mathcal{O}$, where $\U_{\mathcal{O}} := \big\{\{\textbf{S}(i, A, \mathbb{H})\}_{i \in \U}: A \in \mathcal{K}^{\mathcal{U}} \big\}$ ensuring that $S \in \U_\mathcal{O}$ is compatible with the original arm set $\mathcal{K}^\U$, and $\U_{\mathcal{C}} := \big\{S_t: \forall i,j \in \U,\ \text{if}\ \mathcal{C}^{-1}(i) = \mathcal{C}^{-1}(j),\ \text{then}\ s_{i,t} = s_{j,t} \big\}$ denoting all kinds of cluster-wise switchback exposure super arms. For instance, if $\mathcal{U}_s \in \{0,1\}, N=4, \mathcal{C}_1 = \{1,2\}, \mathcal{C}_2 = \{3,4\}$, then $\mathcal{U}_{\mathcal{C}} = \{(k_1,k_1, k_2,k_2):{k_1,k_2 \in \{0,1\}}\}$. Hence, the cardinality of the exposure super arm space satisfies $\vert \U_\mathcal{E} \vert \le |d_s|^C$. The word ``legitimate" means in each round, the policy can only select an exposure super arm $S_t$ in $\U_\bbE$ and sample the $A_t$ according to $\bP(A_t = A \mid S_t)$.
The exposure mapping (which controls $d_s$) and clustering (which controls $C$) allow us to manage the action space; they only need to satisfy the following condition:

\begin{condition}\label{armspace}
  The exposure mapping $\bbS$ and $\mathcal{C}$ should ensure that $2 \leq \vert \U_\bbE \vert \leq T$.
\end{condition}

In addition, we define $\bbY_t(S) = \frac{1}{N} \sum_{i\in\U} \tilde{Y}_{i,t}(S)$ as the expected average reward of the exposure super arm $S \in \U_\bbE$ in round $t$. The reward in round $t$ \( R_t(S_t) \) follows $R_t(S_t) = \frac{1}{N} \sum_{i \in \mathcal{U}} Y_{i,t}(A_t),$ $\text{where } A_t \sim \mathbb{P}(A_t = A \mid S_t)$.

\paragraph{\texttt{MAB-N} is a unified framework.} It is important to note that \texttt{MAB-N} is not parallel to the learning settings in \citet{jia2024multi, agarwal2024multi}; rather, it provides a more general framework that encompasses these settings. In the following, we provide several illustrative examples: \textbf{Example (i).} \textit{Classic MAB}~\citep{Auer2002FinitetimeAO, simchi2023multi} corresponds to the case where $N = 1$, that is, a single unit without network effects, and the exposure mapping is defined as $\boldsymbol{S}(1, A, \mathbb{H}):= A$, where $A \in \mathcal{K}$. \textbf{Example (ii).} \citet{agarwal2024multi} adopt an exposure mapping of the form $\boldsymbol{S}(i, A, \mathbb{H}) := A\bm{e}_i$ and set $C = N$, meaning each unit is assigned to its own cluster. \textbf{Example (iii).} \citet{jia2024multi} also define $\boldsymbol{S}(i, A, \mathbb{H}) := A\bm{e}_i$, but use $C = 1$, assigning all units to a single cluster. This models the global proportion of treatment at each round $t$. \textbf{Example (iv).} The exposure mapping and clustering framework can also be traced back to the offline causal inference literature. Suppose $\sum_j h_{ij} > 0$ for all $j \in \mathcal{U}$. The exposure mapping can be defined as $\boldsymbol{S}(i, A, \mathbb{H}) := \mathbf{1}\left\{\frac{\sum_{j\in\mathcal{U}} h_{ij} a_j}{\sum_{j\in\mathcal{U}} h_{ij}} \in \left[0, \tfrac{1}{2}\right)\right\}$,
which is adapted from the offline setting~\citep{leung2022causal, gao2023causal}. 

As shown in the above examples, \texttt{MAB-N} is a unified framework that captures a wide range of learning settings. Studying it effectively subsumes many existing scenarios. For example, to model a switchback design as in \citet{jia2024multi}, one can adopt the exposure mapping and clustering described in Example (iii). Moreover, \texttt{MAB-N} also enables the exploration of novel frameworks that have not been considered in prior online settings, such as the one presented in Example (iv).

\paragraph{Is $\bH$ necessarily known?} We emphasize that whether the adjacency matrix $\bH$ must be known a priori depends entirely on how the exposure mapping $\bbS(\cdot)$ is defined. For example, if our setting reduces to the scenario in \citet{leung2022causal}---namely, when the exposure mapping depends on all first-order neighbours---then the neighbourhood information in $\bH$ must indeed be known in advance. In contrast, if our exposure mapping simply uses each node's cluster index, then $\bH$ can remain unknown. 
Overall, we include $\bH$ in our setup in order to focus on a unified framework, and this does not imply that all information in $\bH$ always needs to be learned. 

Finally, we introduce the definition of the ATE~\citep{leung2022causal,liang2023experimental}.

\begin{definition}[ATE]\label{definitionATE}
    The ATE between exposure super arm $S_i,S_j \in \U_\bbE$ in round $t$ is $\bar{\tau}_t(S_i,S_j) = \frac{1}{t}\sum_{t^\prime = 1}^t \tau_t(S_i,S_j) = \frac{1}{t}\sum_{t^\prime = 1}^t \big( \bbY_{t^\prime}(S_i) - \bbY_{t^\prime}(S_j) \big)$.
\end{definition}

\subsection{Learning objectives}

\paragraph{Objective 1: Regret minimization.} Based on the setting of the \texttt{MAB-N}, we can refine the definition of regret in Section \ref{MABNI}:
\begin{align}
\nonumber
\mathcal{R}(T, \pi)
=
\max_{S \in \mathcal{U}_{\mathcal{E}}} \sum_{t=1}^T  \bbY_t(S) 
- \mathbb{E}_{\pi} \left[\sum_{t=1}^T R_t(S_t)\right].
\end{align}

\paragraph{Objective 2: Continual inference.} For all \( S_i, S_j \in \mathcal{U}_\mathcal{E} \), our objective is to design a (\(1 - \tilde{\delta}\)) CS \( \{ I_t(S_i, S_j) \}_{t=1}^\infty \), where each \( I_t(S_i, S_j) \) is an interval and $\tilde{\delta}$ is the probability parameter such that $\mathbb{P} \left( \forall t \geq 1, \, \bar{\tau}_t(S_i,S_j) \in I_t(S_i, S_j) \right) \geq 1 - \tilde{\delta}$.

\paragraph{Objective 3: ATE estimation error minimization.} We aim to design estimators $\hat{\Delta}_T(S_i, S_j)$ for all $S_i, S_j \in \U_\bbE$, to minimize the maximum ATE estimation error \citep{simchi2023multi,zhang2024online} defined as $e_\nu(T, \hat{\Delta}) = \max_{S_i, S_j \in \U_\bbE} \bE\big[|\hat{\Delta}_T(S_i, S_j) - \bar{\tau}_T(S_i, S_j)|\big]$.

Recalling Figure~\ref{fig:enter-label}, simultaneously addressing Obj.~1-3 is a shared concern among online learning and statistical researchers, with the former primarily focusing on Obj.~1 and the latter on Obj.~2-3. However, achieving both Obj.~1 and Obj.~2-3 simultaneously is often challenging; essentially, there is a trade-off between the two. When we excessively prioritize the estimator's accuracy (e.g., through independent random sampling), we may fail to adequately explore the optimal strategy, resulting in regret that it does not remain sublinear in time $T$. Conversely, if we focus solely on identifying the optimal strategy, we naturally overlook the measurement of the reward gap, which can lead to uncontrolled variance in the estimator. In the following section, we provide a rigorous description of the relationship between Objective 1 and Objective 3.

\section{Pareto optimality results}\label{4}

 In this paragraph, we aim to construct the theoretical optimal trade-off between the ATE estimation accuracy and the regret.

\begin{theorem}\label{trade-off}
   Given any online decision-making policy $\pi$, and any $\mathcal{S}$ and $\mathcal{C}$ that satisfy Condition~\ref{armspace}, the trade-off between regret and ATE estimation exhibits
    \begin{equation}
    \begin{aligned}
    \inf _{\hat{\Delta}} \max _{\nu \in \mathcal{E}_0} \Big( \sqrt{\mathcal{R}_{\nu}(T, \pi)}  e_\nu(T, \hat{\Delta}) \Big) =  \Omega_{K,T}(\sqrt{|\mathcal{U}_{\mathcal{E}}|}),
    \end{aligned}
    \end{equation}
where $\mathcal{R}_{\nu}$ and $e_{\nu}$ denote, respectively, the regret and the maximum ATE estimation error under instance~$\nu$.
\end{theorem}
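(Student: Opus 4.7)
The plan is to combine a pigeonhole argument on arm pulls with a two-point Le Cam bound on ATE estimation error, using a family of hard stochastic instances inside $\mathcal{E}_0$. Fix a constant $\Delta \in (0,1/2]$ and, for each $S^* \in \mathcal{U}_\mathcal{E}$, construct an instance $\nu_{S^*}$ whose (Bernoulli) potential outcomes are tuned so that $\mathcal{Y}_t(S^*) = 1/2+\Delta$ and $\mathcal{Y}_t(S) = 1/2$ for every $S \neq S^*$. These candidates lie in $\mathcal{E}_0$ since rewards stay in $[0,1]$, and the regret on $\nu_{S^*}$ decomposes as $\mathcal{R}_{\nu_{S^*}}(T,\pi) = \Delta \sum_{S \neq S^*}\mathbb{E}_{\nu_{S^*}}[N_T(S)]$, where $N_T(S)$ counts the pulls of the exposure super arm $S$ up to round $T$.

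Writing $R^* := \mathcal{R}_{\nu_{S^*}}(T,\pi)$, the pigeonhole step produces a suboptimal arm $S^\ddagger$ with $n^\ddagger := \mathbb{E}_{\nu_{S^*}}[N_T(S^\ddagger)] \le R^*/\bigl(\Delta(|\mathcal{U}_\mathcal{E}|-1)\bigr)$. Next, I would form the perturbation $\nu_{S^*}'$ that shifts only $S^\ddagger$'s reward mean by $\epsilon$; since only pulls of $S^\ddagger$ feel the shift, the standard KL chain rule for the bandit interaction gives $\mathrm{KL}(\mathbb{P}_{\nu_{S^*}}^\pi, \mathbb{P}_{\nu_{S^*}'}^\pi) \le n^\ddagger \cdot C\epsilon^2$. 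Picking $\epsilon = c/\sqrt{n^\ddagger}$ ensures $d_{\mathrm{TV}} \le 1/2$, so Le Cam's two-point inequality applied to $\hat{\Delta}_T(S^*, S^\ddagger)$ (whose target $\bar\tau_T(S^*, S^\ddagger)$ differs by exactly $\epsilon$ between the two instances) yields
\begin{equation*}
\max\!\bigl(e_{\nu_{S^*}}(T,\hat{\Delta}),\; e_{\nu_{S^*}'}(T,\hat{\Delta})\bigr) \;\ge\; \Omega(\epsilon) \;=\; \Omega\!\bigl(1/\sqrt{n^\ddagger}\bigr) \;\ge\; \Omega\!\Bigl(\sqrt{\Delta(|\mathcal{U}_\mathcal{E}|-1)/R^*}\Bigr).
\end{equation*}

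Multiplying by $\sqrt{R^*}$, the $R^*$ factors cancel and yield $\sqrt{R^*}\cdot\Omega\bigl(\sqrt{\Delta(|\mathcal{U}_\mathcal{E}|-1)/R^*}\bigr) = \Omega(\sqrt{\Delta(|\mathcal{U}_\mathcal{E}|-1)})$. Thus if the maximum is attained at $\nu_{S^*}$ we are done; if it is attained at $\nu_{S^*}'$, the same pigeonhole/Le Cam template applies with $\nu_{S^*}'$ in place of $\nu_{S^*}$ (whose optimal-arm gap is still $\Delta - \epsilon = \Delta(1-o(1))$, so the argument is symmetric) and delivers the same bound for $\sqrt{\mathcal{R}_{\nu_{S^*}'}}\,e_{\nu_{S^*}'}$. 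Setting $\Delta = 1/2$, a constant absorbed into the $\Omega_{K,T}$ notation, gives $\inf_{\hat{\Delta}} \max_{\nu \in \mathcal{E}_0} \sqrt{\mathcal{R}_\nu(T,\pi)}\,e_\nu(T,\hat{\Delta}) \ge \Omega(\sqrt{|\mathcal{U}_\mathcal{E}|})$, which is the claimed bound.

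The main obstacle is carrying the two-point analysis through the design-based framework: if $Y_{i,t}(A)$ is treated as a deterministic element of $[0,1]$, the two candidate observation distributions can be mutually singular and their KL infinite, and Le Cam collapses. I would sidestep this by embedding the hard family in $\mathcal{E}_0$ via Bernoulli-type stochastic potential outcomes, then carefully propagate the KL chain rule through the $N$-unit exposure-mapping reward $R_t(S_t) = \tfrac{1}{N}\sum_{i\in\mathcal{U}} Y_{i,t}(A_t)$ and the two-stage sampling rule $\pi_t(A) = \mathbb{P}(A_t = A\mid S_t)\,\mathbb{P}(S_t\mid\mathcal{H}_{t-1})$. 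Secondary technical points include keeping $\nu_{S^*}'$ inside $\mathcal{E}_0$ (so $\epsilon < 1/2$), handling the degenerate edge case $R^* = 0$ on $\nu_{S^*}$ by switching to an alternative instance $\nu_{S^{**}}$ on which the policy's locked behavior forces both $\mathcal{R}_{\nu_{S^{**}}} = \Omega(\Delta T)$ and $e_{\nu_{S^{**}}} = \Omega(1)$ (via an unobserved arm), and tracking the $K$- and $T$-dependent multiplicative constants absorbed into the $\Omega_{K,T}$ notation.
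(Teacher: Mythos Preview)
Your plan is the paper's: embed Bernoulli instances in $\mathcal{E}_0$, pigeonhole to locate a rarely-pulled suboptimal exposure super arm, perturb it, and run Le Cam's two-point method via the KL chain rule. Two technical steps need tightening before this goes through.

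First, once the potential outcomes are Bernoulli, the design-based quantities $\mathcal{Y}_t(S)$ and $\bar\tau_T(S^*,S^\ddagger)$ are functions of the \emph{realized} $Y_{i,t}(A)\in\{0,1\}$ and hence random, and likewise your decomposition $\mathcal{R}_{\nu_{S^*}}=\Delta\sum_{S\ne S^*}\mathbb{E}[N_T(S)]$ is the \emph{stochastic} regret, not the adversarial regret that appears in the theorem. The paper closes both gaps explicitly: it applies Hoeffding to show each realized $\bar\tau_T$ lies within $O(\sqrt{\log T/T})$ of its mean with probability $1-1/T$ and then runs Le Cam conditional on that event (this is where the $1-2/T$ prefactor and the $\sqrt{\log T/T}$ correction in the paper's final display come from), and it invokes Jensen to pass from the stochastic to the adversarial regret. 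Without the Hoeffding step your assertion that the two targets ``differ by exactly $\epsilon$'' is false, and Le Cam as you have written it does not apply.

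Second, the symmetry argument for the case where the Le Cam maximum lands on $\nu_{S^*}'$ is circular: reapplying ``the same template'' to $\nu_{S^*}'$ manufactures a third instance and does not terminate. The paper finesses this by simply multiplying through by $\sqrt{\mathcal{R}_{\nu_1}}$ and leaving implicit that $\mathcal{R}_{\nu_2}\gtrsim\mathcal{R}_{\nu_1}$; you can make this honest by perturbing $S^\ddagger$ \emph{downward} (so every suboptimality gap weakly increases) and using the already-controlled TV distance to argue that the policy's pull-count distribution, and hence the regret, does not drop under $\nu_{S^*}'$.
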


\paragraph{The sketch of proof.} 

To prove the lower bound, we construct two adversarial bandit instances that differ only in the expected reward associated with one exposure super arm $S$, while all other distributions remain identical. This creates a small but fixed difference in the average treatment effect (ATE) between $S$ and another arm $S^{\prime}$, yet renders the two instances statistically hard to distinguish. The crux of the argument is that unless the learner samples $S$ sufficiently often, it cannot accumulate enough information to detect this perturbation. Consequently, any estimator of the ATE between $S$ and $S^{\prime}$ will exhibit a large error due to insufficient exploration. Formally, the argument applies tools from information theory, specifically Le Cam's two-point method and a KL-divergence bound, to show that accurate estimation of the ATE requires distinguishing between the two constructed environments, which in turn necessitates a minimum number of pulls of arm $S$. This induces a direct tension between the estimation error $e(T, \hat{\Delta})$ and the cumulative regret $\R(T, \pi)$: minimizing one forces the other to grow. By optimizing this trade-off, we derive the lower bound as above, which highlights a fundamental information-theoretic limit in adversarial bandits under network interference. The novelty lies in extending classical bandit lower-bound techniques to the setting of networked exposure mappings and adversarial reward generation, preserving the sharp dependency on the effective arm space size $\left|\mathcal{U}_\bbE\right|$.

Theorem~\ref{trade-off} establishes the fundamental trade-off between the estimation, namely, the statistical power, and the cumulative regret, namely, the learning efficiency. For instance, when the estimation achieves $T^{-1/2}$ estimation, we claim that, unfortunately, the regret will exhibit as $\Omega(T)$. In contrast, when we omit the estimation of ATE and solely figure out the best arm, the regret will converge. This guideline essentially encourages practitioners to carefully and reasonably design estimators and evaluate their convergence performance concerning $T$. When practitioners are more inclined to estimate the reward gap between different arms rather than pursuing the optimal policy --- such as in scenarios where hospitals, during a specific period of a pandemic, aim to assess the efficacy of treatments more accurately --- efforts should be directed toward actively designing estimators with higher convergence efficiency. Practitioners should also be prepared to accept the trade-off of potential losses in regret convergence resulting from this approach.

\section{Asymptotic Confidence Sequence and Main Algorithm}\label{5}

In this section, we first introduce a technique called asymptotic CS, which facilitates continual inference of the ATE as defined in Definition \ref{definitionATE}. Next, we propose our algorithm \texttt{EXP3-N-CS} that integrates asymptotic CS to achieve three objectives. 

\subsection{Asymptotic CS and MAD}
CS is a series of confidence intervals that remain uniformly valid over time \citep{darling1967confidence,waudby2021time}. Unlike traditional confidence intervals, which are limited to inference at a pre-specified terminal time \( T \), a CS enables continual inference throughout the process. This allows for adaptive decisions regarding experiment termination or continuation, as the learning algorithm does not need to know or define the time horizon \( T \) in advance. Instead, the algorithm can continuously utilize the CS for inference, concluding the experiment once satisfactory learning outcomes are achieved. We introduce the concept of asymptotic CS, first developed by \citet{waudby2021time}. 
% This state-of-the-art technique requires only central limit theorem-like assumptions and does not impose strict conditions on the data (reward) distribution.

\begin{definition}[Asymptotic ($1 - \tilde{\delta}$) CS]
Suppose there exists an (unknown) non-asymptotic ($1 - \tilde{\delta}$) CS $\{\hat{\mu}_t \pm C_t^*\}_{t = 1}^\infty$ for a sequence of target parameter $\{\mu_t\}_{t = 1}^\infty$ and a CS $\{\hat{\mu}_t \pm \hat{C}_t\}_{t = 1}^\infty$ such that $\frac{\hat{C}_t}{C_t^*}\xrightarrow{a.s.} 1,$
then $\{\hat{\mu}_t \pm \hat{C}_t\}_{t = 1}^\infty$ is an asymptotic ($1 - \tilde{\delta}$) CS for $\{\mu_t\}_{t = 1}^\infty$.
\end{definition}

Our Asymptotic CS for \texttt{MAB-N} is defined in the following proposition.

\begin{proposition}[Asymptotic CS for \texttt{MAB-N}]\label{definitionCS}
     We define the asymptotic CS for $S_i, S_j \in \U_\bbE$ as $\{\hat{\bar{\tau}}_t(S_i, S_j) \pm \hat{C}_t(S_i, S_j)\}_{t=1}^\infty$. The IPW estimator $\hat{\bar{\tau}}_t(S_i, S_j)$ is defined as $\frac{1}{t} \sum_{t^\prime = 1}^t \hat{\tau}_{t^\prime}(S_i, S_j) = \frac{1}{t} \sum_{t^\prime = 1}^t \left(\frac{\bone\{ S_{t^\prime} = S_i \} R_{t^\prime}(S_{t^\prime})}{\pi_{t^\prime}(S_i)} - \frac{\bone\{ S_{t^\prime} = S_j \} R_{t^\prime}(S_{t^\prime})}{\pi_{t^\prime}(S_j)}\right)$,
     which serves to estimate $\bar{\tau}_t(S_i, S_j)$. The CS width $\hat{C}_t(S_i, S_j) = \sqrt{\frac{2 (\hat{\V}_t(S_i, S_j) \eta^2 + 1)}{t^2 \eta^2} \log\left( \frac{\sqrt{\hat{\V}_t(S_i, S_j) \eta^2 + 1}}{\tilde{\delta}} \right)}$, where $\hat{\V}_t(S_i,S_j) = \sum_{t^\prime = 1}^t \Big( \frac{1}{\pi^{\text{MAD}}_{t^\prime}(S_i)} + \frac{1}{\pi^{\text{MAD}}_{t^\prime}(S_j)} \Big)$ and $\eta$ is an arbitrary positive parameter.
\end{proposition}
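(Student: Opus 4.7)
My plan is to verify the asymptotic-CS definition directly by exhibiting an (unknown) oracle non-asymptotic $(1-\tilde\delta)$ CS $\{\hat{\bar\tau}_t(S_i,S_j) \pm C_t^\star\}_{t\ge 1}$ of the same functional form as $\hat C_t$, but with a predictable variance quantity $V_t^\star$ in place of the empirical surrogate $\hat{\V}_t$, and then proving $\hat C_t/C_t^\star \xrightarrow{a.s.} 1$. The oracle width arises from Robbins' method of mixtures applied to an exponential supermartingale in the IPW residuals: a Gaussian mixing kernel with precision $\eta^2$ reproduces precisely the closed-form boundary in the proposition, parameterised by $V_t^\star$. Because the boundary map $V\mapsto C(V)$ is continuous through $\sqrt{\cdot}$ and $\log\sqrt{\cdot}$, the asymptotic-CS property reduces to an a.s.\ matching $\hat{\V}_t/V_t^\star \to 1$.

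\paragraph{Martingale structure.} Set $\mathcal F_{t'} := \sigma(\mathcal H_{t'})$ and $\epsilon_{t'} := \hat\tau_{t'}(S_i,S_j) - \tau_{t'}(S_i,S_j)$. Using the two-stage policy decomposition in~\eqref{definitionpolicy}, a direct conditional expectation calculation yields $\mathbb E[\hat\tau_{t'}\mid\mathcal F_{t'-1}] = \bbY_{t'}(S_i) - \bbY_{t'}(S_j) = \tau_{t'}(S_i,S_j)$, so $\{\epsilon_{t'}\}$ is an $\{\mathcal F_{t'}\}$-martingale difference sequence. Because $R_{t'}\in[0,1]$ and the indicators $\bone\{S_{t'}=S_i\},\bone\{S_{t'}=S_j\}$ are mutually exclusive, the conditional second moment satisfies $\sigma_{t'}^2 := \mathbb E[\epsilon_{t'}^2\mid\mathcal F_{t'-1}] \le 1/\pi_{t'}^{\mathrm{MAD}}(S_i) + 1/\pi_{t'}^{\mathrm{MAD}}(S_j)$, which is exactly the $t'$-th summand of $\hat{\V}_t$.

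\paragraph{Oracle CS and variance substitution.} Setting $V_t^\star := \sum_{t'\le t}\sigma_{t'}^2$, the exponential process $M_t(\lambda) := \exp\bigl(\lambda\sum_{t'\le t}\epsilon_{t'} - \tfrac{\lambda^2}{2}V_t^\star\bigr)$ is a nonnegative supermartingale for each admissible $\lambda$; mixing against a $\mathcal N(0,\eta^{-2})$ prior and invoking Ville's inequality at level $\tilde\delta$ gives, after Gaussian integration and isolation of $\sum\epsilon_{t'}$, the uniform bound
\begin{equation*}
\bigl|\hat{\bar\tau}_t - \bar\tau_t\bigr| \le C_t^\star := \sqrt{\frac{2(V_t^\star\eta^2 + 1)}{t^2\eta^2}\,\log\!\left(\frac{\sqrt{V_t^\star\eta^2 + 1}}{\tilde\delta}\right)},
\end{equation*}
holding simultaneously for all $t\ge 1$ with probability at least $1-\tilde\delta$. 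This oracle CS is non-asymptotic but unobservable, since $V_t^\star$ depends on unknown conditional second moments. It remains to show $\hat{\V}_t/V_t^\star \xrightarrow{a.s.} 1$; the continuity of the boundary in $V$ then transfers this ratio to $\hat C_t/C_t^\star \to 1$ a.s., completing the verification of the asymptotic-CS property.

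\paragraph{Main obstacle.} The delicate step is the variance substitution $\hat{\V}_t/V_t^\star\to 1$. Because $V_t^\star$ encodes unobserved, adversarially chosen conditional second moments of the IPW residuals, a naive i.i.d.\ SLLN is unavailable and the crude bound $\sigma_{t'}^2 \le 1/\pi_{t'}^{\mathrm{MAD}}(S_i) + 1/\pi_{t'}^{\mathrm{MAD}}(S_j)$ is not tight in general. The resolution must either (i) refine the oracle by taking $V_t^\star$ to be the conditional second raw moment sum $\sum_{t'\le t}\bigl[\mathbb E[R_{t'}^2\mid S_{t'}=S_i]/\pi_{t'}^{\mathrm{MAD}}(S_i) + \mathbb E[R_{t'}^2\mid S_{t'}=S_j]/\pi_{t'}^{\mathrm{MAD}}(S_j)\bigr]$ and exploit the fact that the MAD exploration schedule drives $1/\pi_{t'}^{\mathrm{MAD}}$ to diverge at a rate that dominates the $O(1)$ predictable slack $(\bbY_{t'}(S_i)-\bbY_{t'}(S_j))^2$, or (ii) substitute a sharper Bernstein-type control in the supermartingale step so the oracle already matches $\hat{\V}_t$ to leading order. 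In either case, carefully shepherding the logarithmic factor in the boundary so that no multiplicative constant other than $1$ survives in the limiting ratio is the technical crux of the proof.
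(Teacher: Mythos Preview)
The proposition is definitional; its validity is established in the proof of Theorem~\ref{CS}, so I compare your plan against that argument. It departs from the paper in two places, and the first is a genuine gap.

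\textbf{The oracle you write down is not a non-asymptotic CS.} You assert that $M_t(\lambda)=\exp\bigl(\lambda\sum_{t'\le t}\epsilon_{t'}-\tfrac{\lambda^2}{2}V_t^\star\bigr)$ is a nonnegative supermartingale. That would require each IPW residual $\epsilon_{t'}$ to be conditionally sub-Gaussian with parameter $\sigma_{t'}^2=\mathrm{Var}(\epsilon_{t'}\mid\mathcal F_{t'-1})$. But $\epsilon_{t'}$ is only bounded, with range of order $1/\pi_{t'}^{\mathrm{MAD}}=O(|\U_\bbE|\,t'^{\alpha})$, and a bounded random variable is sub-Gaussian with parameter proportional to its squared range, not its variance. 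Hence $M_t(\lambda)$ is not a supermartingale in general and Ville's inequality does not deliver the boundary you claim. This is precisely why the construction is \emph{asymptotic}: the Gaussian-mixture boundary is exact only for Brownian motion, and one passes to it by strong approximation under a Lindeberg-type uniform-integrability condition. The paper proceeds accordingly: it (i) verifies that Lindeberg condition in a dedicated lemma, exploiting $(\hat\tau_t-\tau_t)^2=O(t^{2\alpha})$ from the MAD floor together with Assumption~\ref{assumption1}'s growth $\V_t=\Omega(t)$ to choose $\beta\in(2\alpha,1)$, and then (ii) invokes Proposition~2.5 of \citet{waudby2021time} as a black box to obtain the asymptotic CS with the true $\V_t$ in the boundary. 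Your proposal omits the Lindeberg verification entirely and replaces it with a finite-sample supermartingale that does not exist here.

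\textbf{Variance substitution via monotonicity, not ratio convergence.} You correctly flag $\hat{\V}_t/V_t^\star\to 1$ as the crux and note that the crude bound $\sigma_{t'}^2\le 1/\pi_{t'}^{\mathrm{MAD}}(S_i)+1/\pi_{t'}^{\mathrm{MAD}}(S_j)$ need not be tight. The paper never proves this ratio tends to $1$; it does not need to. It simply observes $\V_t\le\hat{\V}_t$ and that the boundary map $V\mapsto C(V)$ is monotone increasing, so replacing $\V_t$ by $\hat{\V}_t$ only widens the interval. A widening of an asymptotic CS is again an asymptotic CS: if $C_t$ has non-asymptotic witness $C_t^\star$ and $\hat C_t\ge C_t$, then $D_t^\star:=\hat C_t\,C_t^\star/C_t\ge C_t^\star$ is still a valid non-asymptotic CS and satisfies $\hat C_t/D_t^\star=C_t/C_t^\star\to 1$. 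So the obstacle you identify is real for your route but is bypassed entirely in the paper's.
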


 Asymptotic CS can appear as a plug-in module that operates independently of any specific algorithm. Its performance is based on the following assumption: 

\begin{assumption}\label{assumption1}
We require that the cumulative conditional variances $\V_t(S_i, S_j) = \sum_{t^\prime = 1}^t \textbf{V}(\hat{\tau}_{t^\prime}(S_i, S_j) \mid \mathcal{F}_{t^\prime})$ grow at least linearly with $t$ for all $S_i, S_j \in \U_\bbE$, that is, $\V_t(S_i, S_j) = \Omega(t)$, where $\mathcal{F}_t$ denotes the sigma algebra that contains $\{\tilde{Y}_{i,t'}(S)\}_{S\in\U_\bbE,i\in\U,t'\in[t]}$ and $\bbH_t$.
\end{assumption}
The above assumption is weaker than the one made by \citet{simchi2023multi}, which assumes that the expected reward gap of each pair of arms is $\Theta(1)$ (stochastic setting). We should mention that our assumption is relatively stronger than the assumption in \citet{waudby2021time,ham2023design}, which only requires the cumulative conditional variance $\V_t(S_i, S_j) \rightarrow \infty$ when $t\rightarrow\infty$, but does not assume a linear growth rate. However, this assumption should hold in the most realistic experimental settings, provided that instances where there exists a time $t'$ such that $\exists S$, $\bbY_t(S) = 0$ for all $t > t'$ are rare in practice and may indicate practical problems with the experiment~\citep{liang2023experimental}. The asymptotic CS presented in Proposition \ref{definitionCS} can be incorporated into various classic adversarial bandit algorithms, such as EXP3. However, algorithms like EXP3 primarily focus on minimizing regret, which often leads to sampling the exposure super arm with low rewards at a low probability. This behavior can reduce the accuracy of our IPW estimator in estimating low-reward exposure super arms and significantly weaken the inference power of the asymptotic CS. Therefore, it is essential to ensure that the algorithm explores each exposure super arm with sufficiently high probability. To this end, we incorporate the MAD \citep{liang2023experimental}, a modular component that can be integrated into various algorithms to promote effective exploration.

\begin{definition}[MAD]
Let the probability of Algorithm \( \text{ALG} \) pulling arm \( S \) in round \( t \) be denoted by \( \pi_t^{\text{ALG}}(S) = \mathbb{P}_{\text{ALG}}(S_t = S \mid \bbH_{t-1}) \), where $\mathbb{P}_{\text{ALG}}$ denotes the probability taken with respect to ALG. After applying MAD, the probability of pulling the exposure super arm \( S \) in round \( t \) is given by $\pi_t^{\text{MAD}}(S) = \mathbb{P}_{\text{MAD}}(S_t = S\mid\bbH_{t-1}) =  \frac{1}{\vert \U_\bbE \vert}\delta_t + (1 - \delta_t)\pi_t^{\text{ALG}}(S)$,  where \( \delta_t \in [0,1] \) is a time-varying parameter and $\mathbb{P}_{\text{MAD}}$ denotes the probability taken concerning MAD. It is easy to verify that $\pi_t^{\text{MAD}}(S) \in [0,1]$ for all $S\in\U_\bbE$ and $\sum_{S\in\U_\bbE}  \pi_t^{\text{MAD}}(S) = 1$.
\end{definition}

 MAD can balance the trade-off between regret minimization and additional exploration. Consider two special cases: $\delta_t = 0$ and $\delta_t = 1$. When $\delta_t = 0$, the policy becomes $\pi_t^{\text{MAD}} = \pi_t^{\text{ALG}}$, entirely focusing on minimizing regret (as we suppose ALG intends to minimize the regret). On the other hand, when $\delta_t = 1$, the policy becomes $\pi_t^{\text{MAD}}(S) = \frac{1}{\vert \U_\bbE \vert}$ for all $S\in\U_\bbE$ (uniformly samples $S_t$ from $\U_\bbE$), entirely prioritizing exploration. The following Theorem \ref{CS} shows that with a specific setup, the CS proposed in Proposition \ref{definitionCS} is a valid asymptotic ($ 1-\tilde {\delta}$) CS.

\begin{theorem}[Performance of the Asymptotic CS]\label{CS}
   Suppose $\bbS$ and $\mathcal{C}$ satisfy Condition \ref{armspace} and Assumption \ref{assumption1} holds. For all \( S_i, S_j \in \U_\bbE \), consider the sequence of random variables \( \big( \hat{\tau}_t(S_i, S_j) \big)_{t=1}^\infty \), where the probability of observing \( S_t = S \) at time \( t \) is given by $\pi_t^{\text{MAD}}(S) = \frac{1}{\vert \U_\bbE \vert} \delta_t + (1 - \delta_t) \pi_t^{\text{ALG}}(S)$. We set \( \delta_t = \frac{1}{t^{\alpha}} \) which satisfies $\alpha \in [0,\frac{1}{2})$. The CS in Proposition \ref{definitionCS} forms a valid asymptotic CS for \( (\bar{\tau}_t(S_i, S_j))_{t = 1}^\infty \) with confidence level \( 1 - \tilde{\delta} \) and the CS width \( \hat{C}_t(S_i,S_j) = \tilde{O}(\vert \U_\bbE \vert^{\frac{1}{2}} t^{\frac{\alpha-1}{2}}) \).
\end{theorem}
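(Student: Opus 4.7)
The plan is to invoke the asymptotic confidence-sequence machinery of \citet{waudby2021time} on the IPW martingale difference sequence $\{\hat{\tau}_{t'}(S_i,S_j) - \tau_{t'}(S_i,S_j)\}_{t'\geq 1}$, using the MAD-inflated sampling distribution as the exploration floor that keeps the importance weights bounded; the stated $\tilde O(|\U_\bbE|^{1/2} t^{(\alpha-1)/2})$ width then follows by pushing the MAD lower bound $\pi_{t'}^{\text{MAD}}(S) \geq \delta_{t'}/|\U_\bbE|$ through the closed-form expression for $\hat{C}_t$ in Proposition~\ref{definitionCS}.

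First I would set up the probabilistic structure. Because MAD guarantees $\pi_{t'}^{\text{MAD}}(S) \geq \delta_{t'}/|\U_\bbE| > 0$ for every $S \in \U_\bbE$, the IPW weights are finite and $\mathbb{E}[\hat{\tau}_{t'}(S_i,S_j) \mid \mathcal{F}_{t'-1}] = \bbY_{t'}(S_i) - \bbY_{t'}(S_j) = \tau_{t'}(S_i,S_j)$, so $M_t := \sum_{t'=1}^t (\hat{\tau}_{t'} - \tau_{t'})$ is a $(\mathcal{F}_t)$-martingale. Using the mutual exclusion of the two IPW indicators together with $R_{t'}\in[0,1]$, the per-round conditional variance satisfies $\textbf{V}(\hat{\tau}_{t'}\mid \mathcal{F}_{t'-1}) \leq 1/\pi_{t'}^{\text{MAD}}(S_i) + 1/\pi_{t'}^{\text{MAD}}(S_j)$, so $\V_t(S_i,S_j) \leq \hat{\V}_t(S_i,S_j)$ deterministically, and Assumption~\ref{assumption1} forces $\V_t = \Omega(t) \to \infty$, which is what keeps the mixture boundary non-degenerate.

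Next I would apply a Robbins-style Gaussian mixture argument with scale parameter $\eta$: the process $\exp(\eta M_t - \tfrac{1}{2}\eta^2 \hat{\V}_t)$, integrated against a standard normal mixture in $\eta$, is a nonnegative supermartingale, and Ville's inequality yields a time-uniform bound $|M_t| \leq \eta^{-1}\sqrt{2(\hat{\V}_t\eta^2+1)\log(\sqrt{\hat{\V}_t \eta^2+1}/\tilde{\delta})}$ with probability at least $1-\tilde{\delta}$; dividing by $t$ converts $M_t$ into the centered running mean $\hat{\bar{\tau}}_t - \bar{\tau}_t$ and reproduces exactly the width $\hat{C}_t(S_i,S_j)$. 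To upgrade this to an asymptotic CS in the sense given earlier, I would exhibit the companion non-asymptotic CS built from $\V_t$ in place of $\hat{\V}_t$ and verify that $\hat{C}_t/C_t^{*} = \sqrt{(\hat{\V}_t\eta^2+1)/(\V_t\eta^2+1)}\cdot(1+o(1)) \xrightarrow{a.s.} 1$, where the ratio convergence uses Assumption~\ref{assumption1} together with the fact that the per-round upper bound on $\textbf{V}(\hat\tau_{t'}|\mathcal{F}_{t'-1})$ is tight up to lower-order terms under the MAD-induced uniform lower bound on $\pi^{\text{MAD}}$.

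Finally, the stated rate follows by plug-in: the MAD floor gives $1/\pi_{t'}^{\text{MAD}}(S) \leq |\U_\bbE|\, t'^{\alpha}$, hence $\hat{\V}_t(S_i,S_j) \leq 2|\U_\bbE|\sum_{t'=1}^{t} t'^{\alpha} = O(|\U_\bbE| t^{1+\alpha}/(1+\alpha))$, and substituting into $\hat{C}_t = \tilde O(\sqrt{\hat{\V}_t}/t)$ yields $\tilde O(|\U_\bbE|^{1/2} t^{(\alpha-1)/2})$, with $\alpha<1/2$ precisely what forces the bound to vanish. The hard part is the ratio convergence $\hat{\V}_t/\V_t \to 1$: since $\hat{\V}_t$ is a deterministic upper envelope rather than a consistent estimator, the cleanest route is to argue that any constant-factor slack between $\hat{\V}_t$ and $\V_t$ is absorbed by the slowly varying $\log\sqrt{\hat{\V}_t\eta^2+1}$ factor, so that the resulting boundary is within $1+o(1)$ of the ideal one whenever $\V_t$ grows polynomially---precisely what Assumption~\ref{assumption1} provides, and which must be tracked carefully because in \texttt{MAB-N} the envelope scales with the effective arm cardinality $|\U_\bbE|$ rather than with a constant as in the single-unit analyses of \citet{ham2023design,liang2023experimental}.
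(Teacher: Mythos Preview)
Your proposal has a genuine gap at its core step. You claim that the Gaussian-mixture process $\exp(\eta M_t - \tfrac{1}{2}\eta^2 \hat{\V}_t)$, integrated over $\eta$, is a nonnegative supermartingale and then invoke Ville. That would be correct if the increments $\hat{\tau}_{t'}-\tau_{t'}$ were conditionally sub-Gaussian with variance proxy $1/\pi^{\text{MAD}}_{t'}(S_i)+1/\pi^{\text{MAD}}_{t'}(S_j)$, but they are not: the IPW increment can be as large as $|\U_\bbE|\,t'^{\alpha}$ in absolute value, so Hoeffding only gives a sub-Gaussian proxy of order $(|\U_\bbE|\,t'^{\alpha})^2$, which is the square of what you are using. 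With the correct proxy the mixture boundary would scale like $|\U_\bbE|\,t^{\alpha-1/2}$, not $|\U_\bbE|^{1/2}\,t^{(\alpha-1)/2}$, and the argument collapses. This is precisely why an \emph{asymptotic} CS is needed here rather than an exact Robbins-style one.

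The paper closes this gap differently. It first proves a separate lemma verifying the Lindeberg-type uniform integrability condition (Condition~L2 of \citet{waudby2021time}): since $(\hat{\tau}_t-\tau_t)^2 \leq 16/(\pi^{\text{MAD}}_t)^2 = O(t^{2\alpha})$ and Assumption~\ref{assumption1} gives $\V_t = \Omega(t)$, choosing any $\beta\in(2\alpha,1)$ makes the indicator $\mathbf{1}\{(\hat{\tau}_t-\tau_t)^2 > \V_t^\beta\}$ vanish for all large $t$, so the Lindeberg sum is finite. With that lemma in hand, Proposition~2.5 of \citet{waudby2021time} delivers the asymptotic CS with width $C_t$ built from the true $\V_t$. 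The passage from $\V_t$ to the computable $\hat{\V}_t$ is then pure monotonicity: because $\hat{\V}_t \geq \V_t$ deterministically, $\hat{C}_t \geq C_t$, and widening an asymptotic CS preserves validity. Your proposed ``hard part'' of showing $\hat{\V}_t/\V_t \to 1$ is therefore unnecessary (and in fact need not hold, since $\hat{\V}_t$ is only an envelope, not a consistent estimator). The final rate calculation you give, $\hat{\V}_t = O(|\U_\bbE|\,t^{1+\alpha})$ and hence $\hat{C}_t = \tilde O(|\U_\bbE|^{1/2}\,t^{(\alpha-1)/2})$, matches the paper exactly.
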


\subsection{Main algorithm}\label{5.2}

In the previous section, we demonstrated that both the asymptotic CS and the MAD can be integrated into learning algorithms such as EXP3. In this section, we analyze the performance of the resulting algorithm, which we refer to as \texttt{EXP3-N-CS}. 

\begin{algorithm}[t]
    \caption{\texttt{EXP3-N-CS}}
    \begin{algorithmic}[1]\label{alg1}
        \STATE \textbf{Input:} arm set $\A$, unit set $\U$, exposure super arm set $\U_\mathcal{E}$, sequence $\{\mathcal{L}_m\}_{m=1}^\infty$
        \FOR{\(t = 1, 2, \dots\)}
            \STATE Compute $m$ such that $t \in \mathcal{L}_m$, set $\epsilon_m = \sqrt{\frac{\log(\vert \U_\bbE \vert)}{\vert \U_\bbE \vert 2^{m-1}}}$
            \IF{$t = t_m$}
            \STATE For all $S \in \U_\bbE$, set
            $\pi_t^{\text{ALG}}(S) = \frac{1}{\vert \U_\bbE \vert}$
            \ELSE
            \STATE For all $S \in \U_\bbE$, set
            $\pi_t^{\text{ALG}}(S) = \frac{\exp\left( \epsilon_m \hat{R}_{\mathcal{L}_m,t-1}(S) \right)}{\sum_{S' \in \U_\bbE} \exp\left( \epsilon_m \hat{R}_{\mathcal{L}_m,t-1}(S^\prime) \right)}$
            \ENDIF
            \STATE For all $S \in \U_\bbE$, set $\pi_t^{\text{MAD}}(S) = \frac{1}{\vert \U_\bbE \vert} \delta_t + (1 - \delta_t) \pi_t^{\text{ALG}}(S)$ 
            \STATE Sample $S_t$ based on $\pi_t^{\text{MAD}}$, implement \(\texttt{Sampling}(S_t)\) and observe the rewards $R_t(S_t)$
            \STATE For all $S_i,S_j\in\U_\bbE$, construct the confidence sequence $\hat{\bar{\tau}}_t(S_i, S_j) \pm \hat{C}_t(S_i, S_j)$
            \STATE For all $S\in\U_\bbE$, set $\hat{R}_{\mathcal{L}_m,t}(S) = \sum_{t^\prime=t_m}^{t} 1 - \frac{\bone\{S_t = S\} \left( 1 - R_t(S_t) \right)}{\pi^{\text{MAD}}_t(S)}$
        \ENDFOR
        \STATE Return $\hat{\Delta}^{(i,j)}_T = \hat{\bar{\tau}}_T(S_i, S_j)$ for all $S_i,S_j\in\U_\bbE$
    \end{algorithmic}
\end{algorithm}

\begin{algorithm}[t]
\caption{\texttt{Sampling}}
	\begin{algorithmic}[1]\label{alg2}
        \STATE \textbf{Input:} $S_t$
        \STATE Derive the set of real super arm $\{Z_{l'}\}_{l'\in[l]}$ such that for all $Z_{l'}$, $\{\bbS(i,Z_{l'},\bbH)\}_{i\in\U} = S_t$
        \STATE Sample $A_t$ from set $\{Z_{l'}\}_{l'\in[l]}$ based on $\bP(A_t = Z_{l'} \mid S_t)$, pull $A_t$,
        and observe reward $R_t(S_t) = \frac{1}{N}\sum_{i\in\U}Y_{i,t}(A_t)$
	\end{algorithmic}  
\end{algorithm}

\texttt{EXP3-N-CS} is designed to achieve three learning objectives. We ensure that the algorithm does not rely on prior knowledge of $T$ by employing the doubling trick \citep{besson2018doubling}. We define the time interval $\mathcal{L}_m := \{t_m,\dots, t_m + 2^{m-1} - 1\}$, where $t_1 = 1$ and $t_m = 1 + \sum_{m'=0}^{m-2} 2^{m'}$ for all $m>1$. The algorithm begins by computing the policy \( \pi_t^{\text{ALG}}(S) \) (ALG equals to EXP3) for all \( S \in \U_\bbE \) using the standard EXP3 technique (line 4-8). Next, it adjusts the policy to derive \( \pi_t^{\text{MAD}}(S) \) for all \( S \in \U_\bbE \) based on the MAD (line 9). The algorithm then samples \( S_t \) based on \( \pi_t^{\text{MAD}} \) and subsequently samples \( A_t \) conditioned on \( S_t \) (line 10 and Algorithm \ref{alg2}). Algorithm \ref{alg2} will first derive a real super arm candidate set $\{ Z_{l^\prime} \}_{l' \in [l]}$ such that $\{\bbS(i,Z_{l'},\bbH)\}_{i\in\U} = S$, $\forall l' \in [l]$. Then, it will sample $A_t$ from $\{ Z_{l^\prime} \}_{l' \in [l]}$ based on $\mathbb{P}(A_t = A \mid S_t)$ (note that for all $A\not\in\{ Z_{l^\prime} \}_{l' \in [l]}$, $\mathbb{P}(A_t = A \mid S_t) = 0$). Using the asymptotic CS proposed in Proposition \ref{definitionCS}, the algorithm constructs the CS to estimate the ATE in each round (line 11). Note that the $\pi_t(S)$ in the asymptotic CS should be replaced with $\pi^{\text{MAD}}_t(S)$. Finally, after the algorithm terminates the iteration, the algorithm outputs \( \hat{\Delta}_T(S_i, S_j) = \hat{\bar{\tau}}_T(S_i, S_j) \) as the estimated ATE (line 14).

\begin{theorem}\label{ATEbound}
   Following the setting in Theorem \ref{CS}:
   
   (i) \textbf{(Estimation error upper bound)} For all $S_i, S_j \in \U_\bbE$, define $\hat{\Delta}_T(S_i, S_j) := \hat{\bar{\tau}}_T(S_i, S_j)$. Then, we have $\mathbb{E}[\vert \hat{\Delta}_T(S_i, S_j) - \bar{\tau}_T(S_i, S_j) \vert] = \tilde{O}\big(\vert \U_\bbE \vert^{\frac{1}{2}} t^{\frac{\alpha - 1}{2}} \big)$.

    (ii) \textbf{(Regret upper bound)}  The regret of the \texttt{EXP3-N-CS} can be upper bounded by $\R(T,\pi^{\text{MAD}}) = \tilde{O}\big(\sqrt{\vert \U_\bbE \vert T} + T^{1 - \alpha}\big)$.

    (iii) \textbf{(Pareto-optimality)}  For all legitimate instances \( \nu \in \bbE_0 \), select $\alpha$ such that $\sqrt{\vert \U_\bbE \vert T} \le T^{1-\alpha}$ and $\alpha\in[0,\frac{1}{2})$, then \texttt{EXP3-N-CS} guarantees $e_\nu(T, \hat{\Delta})\sqrt{\R_\nu(T, \pi^{\text{MAD}})} = \tilde{O}\big(\sqrt{|\U_\bbE|}\big)$.
\end{theorem}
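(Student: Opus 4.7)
The plan is to establish the three parts sequentially: the IPW estimation error via a conditional-variance bound induced by the MAD floor on sampling probabilities, the regret via a decomposition into a standard EXP3-with-doubling term plus a MAD exploration overhead, and finally the Pareto product by direct substitution.

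For part (i), the key ingredient is the floor $\pi_t^{\text{MAD}}(S) \ge \delta_t/\vert \U_\bbE \vert = t^{-\alpha}/\vert \U_\bbE\vert$ for every $S \in \U_\bbE$, which is guaranteed by the MAD construction. Using boundedness of the rewards in $[0,1]$, I would bound the conditional variance of the single-round IPW difference by
\[
\mathrm{Var}\bigl(\hat{\tau}_{t'}(S_i,S_j)\mid \cF_{t'-1}\bigr) \;\le\; \frac{2}{\pi_{t'}^{\text{MAD}}(S_i)} + \frac{2}{\pi_{t'}^{\text{MAD}}(S_j)} \;=\; O\bigl(\vert \U_\bbE \vert \,{t'}^{\alpha}\bigr).
\]
Since each $\hat\tau_{t'}$ is conditionally unbiased for $\tau_{t'}$ given $\cF_{t'-1}$, the difference $\hat{\bar\tau}_T - \bar\tau_T = \frac{1}{T}\sum_{t'=1}^T(\hat\tau_{t'}-\tau_{t'})$ is a martingale average, so summing the variances and dividing by $T^2$ gives $\mathrm{Var}(\hat{\bar\tau}_T - \bar\tau_T) = O(\vert \U_\bbE \vert T^{\alpha-1})$. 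Jensen's inequality then yields $\bE[|\hat\Delta_T - \bar\tau_T|] \le \sqrt{\mathrm{Var}(\hat{\bar\tau}_T - \bar\tau_T)} = \tilde{O}(\vert \U_\bbE \vert^{1/2}T^{(\alpha-1)/2})$.

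For part (ii), I would decompose the expected regret as
\[
\R(T,\pi^{\text{MAD}}) \;=\; \underbrace{\Bigl(\max_S \sum_t \bbY_t(S) - \sum_t \bE_{\pi_t^{\text{ALG}}}[\bbY_t]\Bigr)}_{\text{EXP3 part}} + \underbrace{\sum_{t=1}^T \delta_t\bigl(\bE_{\pi_t^{\text{ALG}}}[\bbY_t]-\bE_{\text{unif}}[\bbY_t]\bigr)}_{\text{MAD overhead}}.
\]
The MAD overhead is bounded per round by $\delta_t = t^{-\alpha}$, summing to $O(T^{1-\alpha})$. The EXP3 component is handled epoch by epoch via the doubling trick: within each epoch $\cL_m$ of length $2^{m-1}$, the forced restart at $t = t_m$ resets the distribution to uniform, and the learning rate $\epsilon_m = \sqrt{\log\vert \U_\bbE \vert/(\vert \U_\bbE\vert 2^{m-1})}$ is tuned for this epoch length; the standard EXP3 potential-function argument applied to the IPW loss estimator $\hat R_{\cL_m,t}$ (which is unbiased under $\pi^{\text{MAD}}$ sampling, since the importance denominator matches the actual sampling distribution) yields per-epoch regret $\tilde O(\sqrt{\vert \U_\bbE\vert 2^{m-1}})$. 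Summing the geometric series over $m = 1,\dots,\lceil\log_2 T\rceil$ gives $\tilde O(\sqrt{\vert \U_\bbE\vert T})$, and combining both pieces yields $\tilde O(\sqrt{\vert \U_\bbE\vert T} + T^{1-\alpha})$.

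For part (iii), under the assumed condition $\sqrt{\vert \U_\bbE\vert T} \le T^{1-\alpha}$, the regret bound simplifies to $\tilde O(T^{1-\alpha})$, so $\sqrt{\R_\nu(T,\pi^{\text{MAD}})} = \tilde O(T^{(1-\alpha)/2})$. Multiplying by $e_\nu(T,\hat\Delta) = \tilde O(\vert \U_\bbE\vert^{1/2}T^{(\alpha-1)/2})$ from part (i) causes the $T$-exponents to cancel, yielding $\tilde O(\vert \U_\bbE\vert^{1/2})$, which matches the lower bound of Theorem \ref{trade-off} up to polylogarithmic factors. The main obstacle is the EXP3 regret argument in (ii): because sampling is performed from $\pi^{\text{MAD}}$ rather than $\pi^{\text{ALG}}$, one must carefully verify that the standard mirror-descent second-order term remains controlled despite the inflated inverse-probability factors $1/\pi_t^{\text{MAD}}(S) \le \vert \U_\bbE\vert t^{\alpha}$; this is precisely why the schedule $\delta_t = t^{-\alpha}$ with $\alpha < 1/2$ is chosen, so that the MAD-induced variance inflation does not overwhelm the target $\sqrt{\vert \U_\bbE\vert T}$ rate. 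The doubling trick is a comparatively clean technical device once the per-epoch regret is established.
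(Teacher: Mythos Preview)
Your proposal is correct and, for parts (ii) and (iii), follows the same route as the paper: the MAD-overhead-plus-EXP3 decomposition, the per-epoch potential argument under the doubling trick, and direct substitution for the Pareto product.

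For part (i) you take a genuinely different and more elementary route. The paper does not argue via a direct martingale-variance bound; instead it invokes the asymptotic CS of Theorem~\ref{CS} to obtain $|\hat{\Delta}_T - \bar\tau_T| \le 2\hat{C}_T(S_i,S_j)$ with probability $1-\tilde\delta$, then sets $\tilde\delta = 1/T$ and bounds the expectation by splitting on the good event. Your argument (MAD floor $\Rightarrow$ per-round conditional variance $O(|\U_\bbE|\,t^{\alpha})$ $\Rightarrow$ orthogonality of martingale increments $\Rightarrow$ Jensen) is self-contained, does not require Assumption~\ref{assumption1} or any CS machinery, and sidesteps the delicate question of what an \emph{asymptotic} CS guarantees at a fixed finite $T$. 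The paper's route buys narrative economy---part (i) and the CS width share one proof---but yours is the cleaner bound.

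One small recalibration of the obstacle you flag in (ii): after taking expectation over $S_t \sim \pi_t^{\text{MAD}}$, the quantity that appears in the second-order term is $\sum_{S}\pi_t^{\text{ALG}}(S)/\pi_t^{\text{MAD}}(S)$, not the raw $1/\pi_t^{\text{MAD}}(S)$. Since $\pi_t^{\text{MAD}}(S) \ge (1-\delta_t)\,\pi_t^{\text{ALG}}(S)$, this sum is at most $|\U_\bbE|/(1-\delta_t)$, a constant multiple of $|\U_\bbE|$ for all $t \ge 2$ regardless of $\alpha$. So the $\sqrt{|\U_\bbE|T}$ rate for the EXP3 component holds with no constraint coming from the $t^{\alpha}$ inflation; the restriction $\alpha < 1/2$ enters only through the CS validity in Lemma~\ref{lemmacondition}, not the regret analysis.
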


From Theorem~\ref{ATEbound}~(iii), we conclude that \texttt{EXP3-N-CS} achieves the Pareto-optimal trade-off established in Theorem~\ref{trade-off}. There is no need to choose $\alpha$ larger than $\tfrac{1}{2}$, as doing so does not reduce the regret but instead deteriorates the estimation accuracy. Furthermore, although our analysis centers on \texttt{EXP3-N-CS}, the underlying design principles naturally extend to a broader class of bandit algorithms, owing to the strong modularity and composability of the Asymptotic~CS and MAD components. In particular, for any base algorithm that achieves a regret bound of the form \(\tilde{O}(\sqrt{|\mathcal{U}_{\mathcal{E}}| T})\) under the \texttt{MAB-N} framework, the performance guarantee in Theorem~\ref{ATEbound} can be easily extended. This is because the MAD adopts the form 
$\pi_{t}^{\mathrm{MAD}}(S) 
= \frac{1}{|\mathcal{U}_\bbE|}\delta_t 
  + (1-\delta_t)\pi_{t}^{\mathrm{ALG}}(S)$,
where the first term (with coefficient $\delta_t$) is introduced to improve ATE estimation, and the second term $\pi_{t}^{\mathrm{ALG}}(S)$ is the base algorithm aimed at regret minimization. To analyze the regret, we decompose the total regret into two components corresponding to the two terms in the strategy, and then separately upper bound each. 
As shown in Theorem~5.6~(ii), the first part of the regret scales as $\tilde{\mathcal{O}}\big(\sqrt{|\mathcal{U}_\bbE|T}\big)$, which comes from the $(1-\delta_t)\pi_{t}^{\mathrm{ALG}}(S)$ term, and the second part scales as $\tilde{\mathcal{O}}(T^{1-\alpha})$, arising from the $\frac{1}{|\mathcal{U}_\bbE|}\delta_t$ term, which is independent of the specific base algorithm. Therefore, we can easily analyze the overall regret upper bound as 
$\tilde{\mathcal{O}}\big(\sqrt{\vert \U_\bbE\vert T} + T^{1-\alpha}\big)$. In this sense, the Asymptotic~CS and MAD components serve as general-purpose mechanisms that facilitate balancing Objectives~1--3 across a broad range of algorithms.

We now present guidance on selecting $\alpha$ by combining theoretical insights with practical considerations. Specifically, setting $\alpha = 0$ results in linear regret, but ensures a fast convergence rate for the CS width and the ATE, specifically $\tilde{O}\big(\vert \U_\bbE \vert^{\frac{1}{2}} t^{-\frac{1}{2}} \big)$. In addition, setting $\alpha$ such that $\sqrt{\vert \U_\bbE \vert T} = T^{1-\alpha}$, we can minimize the regret to the level of $\tilde{O}\big(\sqrt{\vert \mathcal{U}_\bbE \vert T}\big)$ while achieving statistical inference with $\tilde{O}\big(\vert \mathcal{U}_\bbE \vert^{\frac{1}{4}} T^{-\frac{1}{4}}\big)$. To demonstrate its practical selection, consider the following example. When treating a group of critically ill patients, the primary goal is to minimize regret by assigning treatments that are currently believed to be most effective, thereby improving their immediate survival chances within the network. 
Conversely, for patients with milder symptoms and stable conditions, it can be advantageous to explore less-certain yet promising treatments, as doing so improves the evaluation of the relative effectiveness of different options. 
This enhanced inference leads to more precise ATE estimation and, in turn, supports better-informed treatment decisions for future populations. 
Accordingly, one may prefer selecting $\alpha$ closer to $\tfrac{1}{2}$ in the former scenario, prioritizing regret minimization, and closer to $0$ in the latter, emphasizing pure exploration.

\section{Experiments}\label{experiments}

\begin{figure*}[t]
\centering
 \subfigbottomskip=4pt
	\subfigcapskip=-5pt 
	\subfigure[Cumulative regret]{
\includegraphics[width=0.31\linewidth]{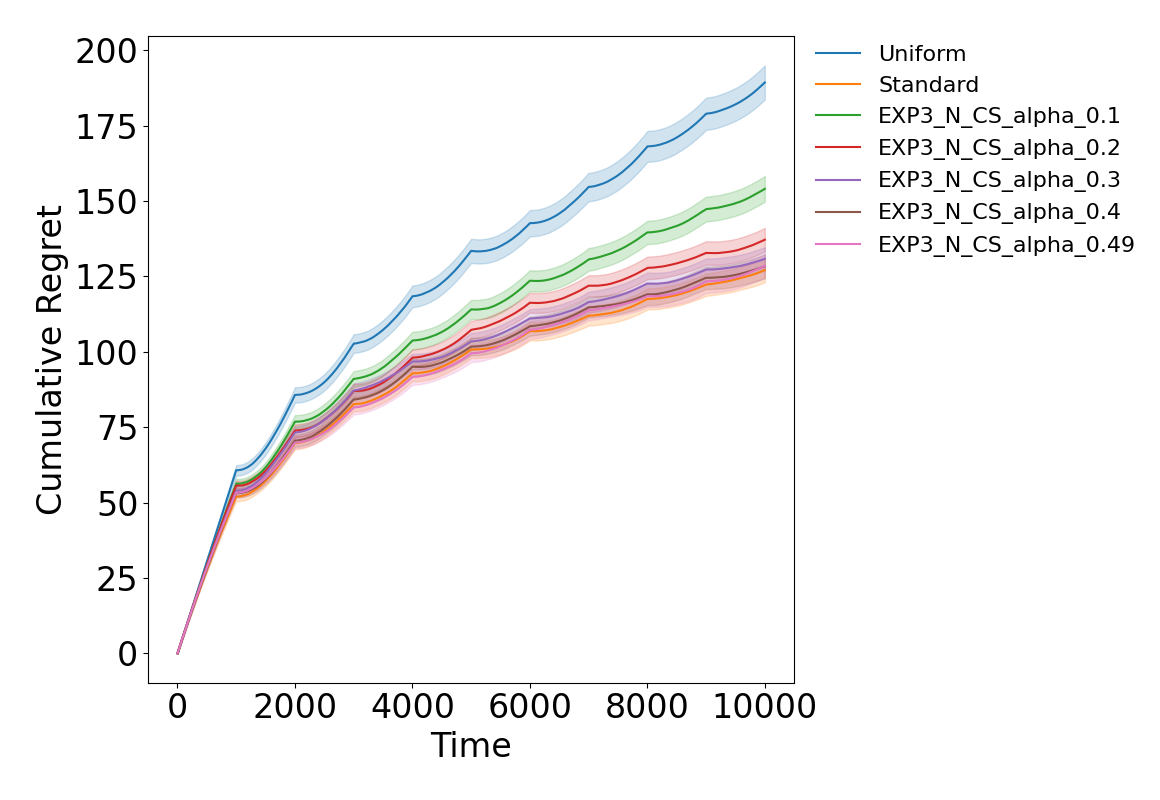}\label{fig2b}}
         \subfigure[CS width]{
\includegraphics[width=0.31\linewidth]{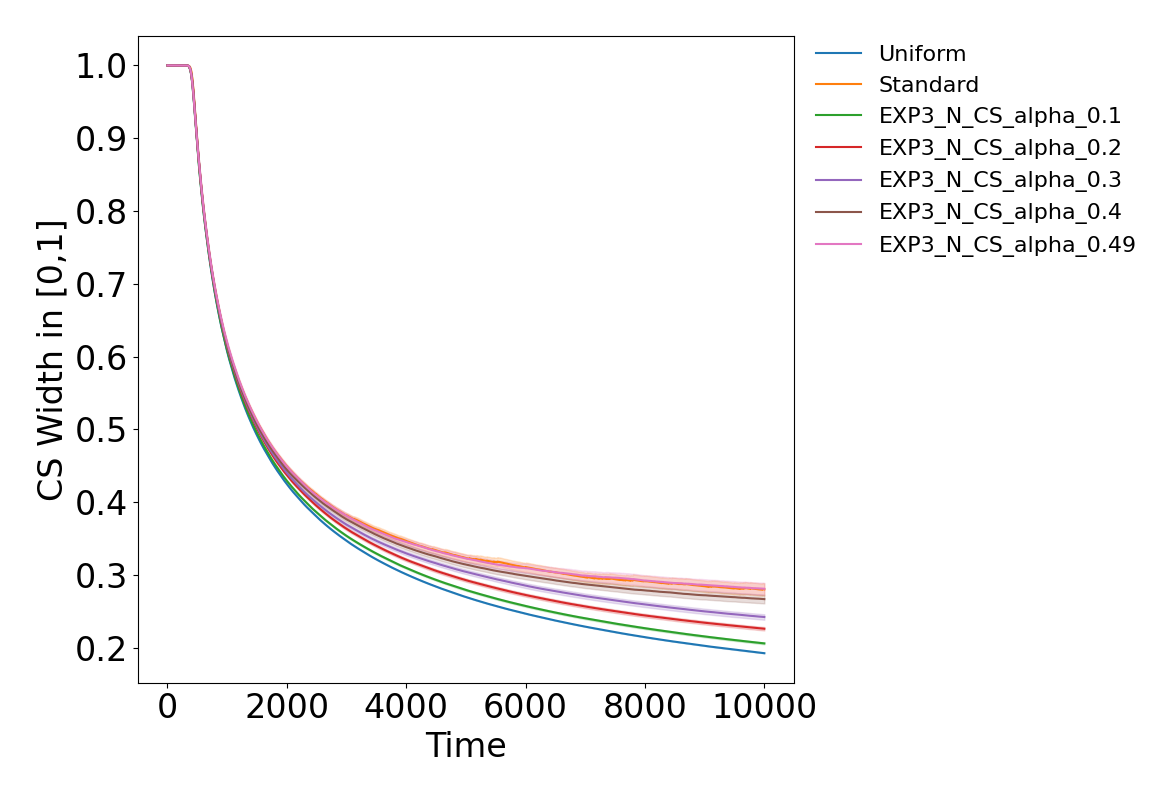}\label{fig2c}}
\subfigure[Maximum ATE estimation error]{
\includegraphics[width=0.34\linewidth]{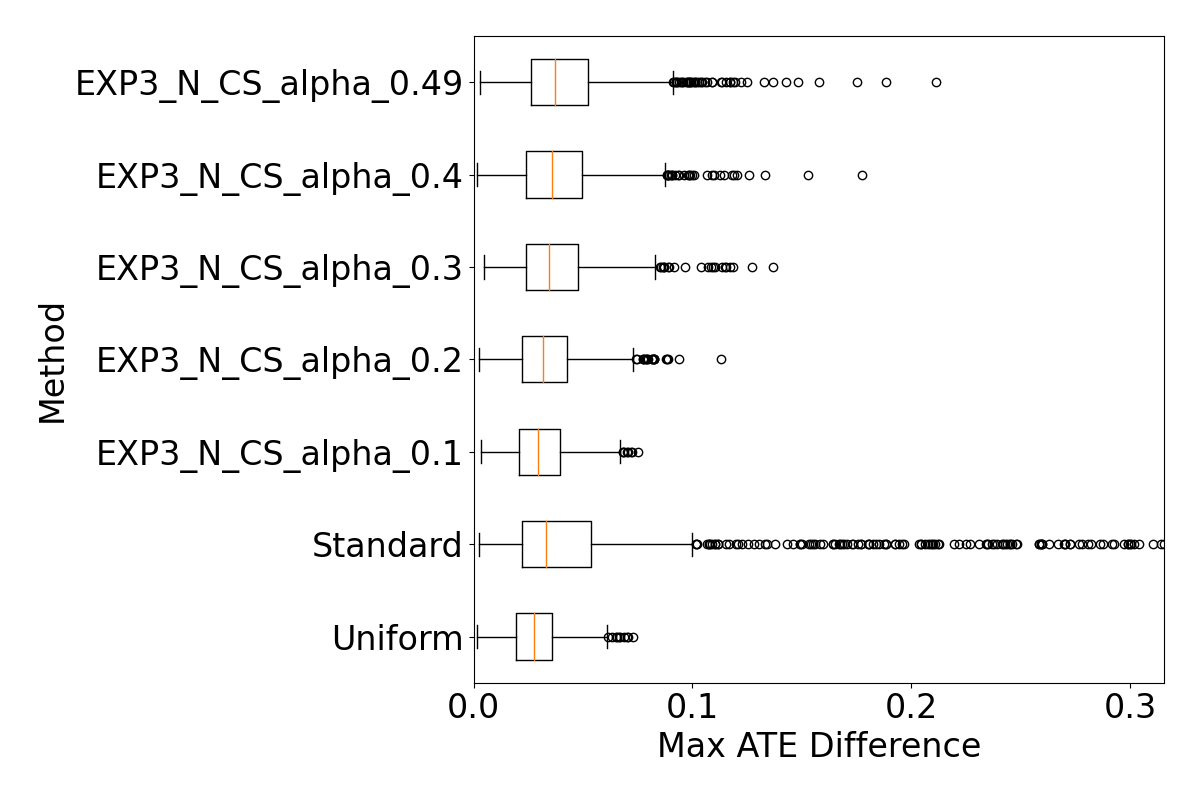}\label{fig2d}}
\caption{Experimental results.}
\end{figure*}

In this section, we demonstrate the empirical performance of our \texttt{EXP3-N-CS} by some simulation studies. The code is available at: \url{https://github.com/TheoryMagic/Design-based-Bandits}.

\paragraph{Setup.} We consider a network consisting of 101 units. Specifically, there is one center cluster $C_1 = \{1\}$ that contains a single unit, which is connected to every unit in the five outer clusters. Each of the outer clusters contains 20 units. We set the action set $\mathcal{K} = \{0,1\}$. Additionally, we define the exposure mapping inspired by \citep{leung2022causal,gao2023causal}, expressed as $ \bbS(i, A, \bH) = \bone\left\{\frac{\sum_j h_{i,j} \times a_{j}}{\sum_j h_{i,j}} \in \left[0, \frac{1}{2}\right)\right\}$, exploring the influence of the proportion of action $1$ taken among all the neighbors of each unit. The exposure mapping implies $d_s = 2$. For all \( S \in \mathcal{U}_{\bbE} \), we define $\bP(A_t = A \mid S)$ as uniform sampling, and \(\mathcal{Y}_t(S) = \frac{1}{N} \sum_{i \in \mathcal{U}} Y_{i,t}(A)\) for all \( A \) such that \(\{ \mathbf{S}(i, A, \mathbb{H}) \}_{i \in \mathcal{U}} = S\). Besides, we let $\bbY_t(S)$ be sampled from a Bernoulli distribution. The mean of this Bernoulli distribution is uniformly resampled from $[0,1]$ every 1000 rounds. We set the trade-off parameter of \texttt{EXP3-N-CS} to 
$\alpha \in \{0.1, 0.2, 0.3, 0.4, 0.49\}$ 
and compare its performance against two baselines: 
\textit{Standard} (where $\delta_t = 0$) and 
\textit{Uniform} (where $\delta_t = 1$). 
Each algorithm is executed $1000$ times, and we report the averaged results.

\paragraph{Results.} The simulation results are shown in Fig. \ref{fig2b}, \ref{fig2c} and \ref{fig2d}. From Fig. \ref{fig2b}, the Uniform baseline consistently exhibits the highest cumulative regret throughout the entire horizon. In contrast, both the Standard baseline and \texttt{EXP3-N-CS} with larger $\alpha$ values (e.g., $\alpha = 0.4$ or $\alpha = 0.49$) achieve the lowest cumulative regret. This is because Uniform does not focus on minimizing regret. Fig. \ref{fig2c} illustrates the trajectories of the CS width \( \hat{C}_t(S_i, S_j) \), where $S_i,S_j = \arg\max_{S_i,S_j \in \U_\bbE} \hat{C}_T(S_i,S_j)$ ($\hat{C}_T(S_i,S_j)$ takes the average value of 1000 times). 
The Uniform baseline achieves the narrowest CS, indicating the most accurate inference. 
In contrast, the Standard baseline maintains the widest CS width throughout the horizon, implying the least accurate inference. 
The \texttt{EXP3-N-CS} variants lie between these two extremes, with smaller $\alpha$ values producing wider CS widths that approach that of Uniform. Fig. \ref{fig2d} presents the box plot of the maximum ATE estimation error (i.e., $e_\nu(T,\hat{\Delta})$), where the orange line represents the median. As shown in Fig.~\ref{fig2d}, both \texttt{EXP3-N-CS} variants with smaller $\alpha$ values and the Uniform baseline achieve relatively low maximum ATE estimation errors with compact interquartile ranges and fewer extreme outliers. 
In contrast, the Standard baseline exhibits a noticeably wider spread of errors and a substantial number of outliers. 
This inferior inference performance of Standard (Obj.~2--3) is attributed to its lower frequency of exploring sub-optimal arms compared to \textit{Uniform} and the \texttt{EXP3-N-CS} variants. Due to page limitations, we present four extensive experimental instances in Section \ref{AExperiment} of the Appendix.

\section*{Acknowledgements}
Haoyang Hong and Huazheng Wang are supported by National Science Foundation under grant IIS-2403401.

\vskip 0.2in

\bibliographystyle{unsrtnat} % 或 plainnat/abbrvnat 等
\bibliography{sample}          % 这里的文件名必须与上传的 .bib 完全一致

\newpage
\appendix
\onecolumn

\section{Notations}

\begin{table}[h!]
\centering
\begin{tabular}{l|l}
\toprule
$\mathcal{U}$ & Set of units \\
$N$ & Number of units \\
$\bH$ & Adjacency matrix \\
$\mathcal{K}$ & Real arm set (action set) \\
$K$ & Number of real arms \\
$a_{i,t}$ & Arm assigned to unit $i$ \\
$A_t$ & Real super arm pulled in round $t$ \\
$\bbS(i,A,\bH)$ & Exposure mapping\\
$s_{i,t}$ & Exposure arm assigned to unit $i$ \\
$S_t$ & Exposure super arm sampled in round $t$ \\
$\hat{R}_{\mathcal{L}_m,t}(S)$ & Reward estimator for exposure super arm $S$ \\
$\U_s$ & Set of exposure super-arms\\
$d_s$ & Number of exposure arm\\
$\mathcal{U}_\mathcal{E}$ & Legitimate exposure super arm set  \\
$\mathcal{U}_\mathcal{O}$ & Set of exposure super arm that can be triggered by real super arm \\
$\mathcal{U}_\mathcal{C}$ & Set of cluster-wise switchback exposure super arm \\
$\pi_t^{\text{ALG}}(S)$ & Probability of Algorithm \texttt{ALG} pulling exposure super arm $S$ \\
$\pi_t^{\text{MAD}}(S)$ & Probability of pulling exposure super arm $S$ after using MAD \\
$\mathcal{E}_0$ & Set of legitimate instances \\
$Y_{i,t}(A)$ & Expected reward of the unit $i$ under $A$\\
$\tilde{Y}_{i,t}(S)$ & Expected reward of unit $i$ under $S$\\
$\bbY_t(S)$ & Average expected reward under $S$ \\
$R_t(S_t)$ & Average reward under $S_t$ in round $t$ \\
$\mathcal{R}(T, \pi)$ & Cumulative regret \\
$\tau_t(S_i,S_j)$ & Difference between potential outcome of $S_i$ and $S_j$ \\
$\bar{\tau}_t(S_i,S_j)$ & ATE between $S_i$ and $S_j$\\
$\hat{\tau}_t(S_i, S_j)$ & IPW estimator for $\tau_t(S_i,S_j)$ \\
$\hat{\bar{\tau}}_t(S_i, S_j)$ & IPW estimator for $\bar{\tau}_t(S_i,S_j)$ \\
$\hat{C}_t(S_i, S_j)$ & CS width \\
$\V_t(S_i,S_j)$ & Cumulative conditional variance between $S_i$ and $S_j$\\
$\hat{\V}_t(S_i,S_j)$ & Estimator of the cumulative conditional variance between $S_i$ and $S_j$ \\
$\{\bar{\tau}_t(S_i,S_j) \pm \hat{C}_t(S_i, S_j)\}_{t = 1}^\infty$ & Confidence sequence \\
$\hat{\Delta}_T(S_i, S_j)$ & Estimated ATE between $S_i$ and $S_j$ \\
$e_\nu(T, \hat{\Delta})$ & Maximum estimation error of the ATE \\
\toprule
\end{tabular}
\end{table}

\newpage

\section{Comparing MABNI with Multiple-Play, Multi-Agent and Combinatorial Bandits}

The MABNI problem is related to the \textit{multi-agent bandit} problem, in which multiple agents simultaneously pull arms in each round. These agents often collaborate by sharing their local observations to collectively accelerate learning \citep{Szrnyi2013GossipbasedDS,Wu2016ContextualBI,He2022ASA,Wang2019DistributedBL,wang2023pure}. A key distinction lies in the modeling assumptions: multi-agent bandit formulations typically assume a priori relationships among agents—such as cooperation or competition—and place significant emphasis on the design of communication protocols to enable coordination or negotiation.
Besides, the \textit{multi-play bandit} problem, where the algorithm selects multiple arms in each round and receives individual reward feedback for each, is closely related to the MABNI setting. This framework has been extensively studied in the literature \citep{louedec2015multiple,lagree2016multiple,zhou2018budget,besson2018multi,jia2023short}. While both settings involve the simultaneous selection of multiple actions, MABNI further emphasizes the interference among actions selected at different units, where the reward of a unit may depend not only on its own action but also on the other actions selected in the same round. Furthermore, our work is also related to the \textit{combinatorial bandit} problem, where the learner selects a subset of base arms—often subject to combinatorial constraints such as budgets or matroids—and receives feedback and rewards that depend on the selected combination \citep{cesa2012combinatorial,chen2013combinatorial,chen2014combinatorial,combes2015combinatorial,kveton2015combinatorial,saha2019combinatorial,wang2023adversarial}. Some existing works consider interference effects among units, but such interference is typically either explicitly known to the learner or assumed to follow a predefined structural pattern. In contrast, the MABNI makes no assumptions about the nature or structure of interference across units; instead, it needs to implicitly learn the interference effects through observed rewards

\section{Proof of Theorem \ref{trade-off}}

\begin{proof}[Proof of Theorem \ref{trade-off}]

Recall that the definition of ATE in round $t$ is defined as
\begin{align}
    \begin{split}
    \nonumber
        \bar{\tau}_t(S_i,S_j) &= \frac{1}{t}\sum_{t^\prime = 1}^t \tau_t(S_i,S_j) = \frac{1}{t}\sum_{t^\prime = 1}^t \big( \bbY_{t^\prime}(S_i) - \bbY_{t^\prime}(S_j) \big),
    \end{split}
    \end{align}
and the definition of regret is
\begin{align}
\mathcal{R}(T, \pi)
=
\max_{S \in \mathcal{U}_{\mathcal{E}}} \sum_{t=1}^T  \bbY_t(S) 
- \mathbb{E}_{\pi} \left[\sum_{t=1}^T R_t(S_t)\right].
\end{align}
Here  \begin{equation}
        \begin{aligned}
        \Tilde{Y}_{i,t}(S_{}) &= \sum_{A \in \mathcal{K}^{\U}} Y_{i,t}(A) \mathbb{P}(A_t = A\mid S),~\mathcal{Y}_t(S) = \frac{1}{N}\sum_{i \in \mathcal{U}} \tilde{Y}_{i,t}(S). 
\end{aligned}
\end{equation}
 Given a fixed policy $\pi$, we provide the following hard instances. We define the first instance as $\nu_1 \in \bbE_0$, in which $Y_{i,t}(A) \sim \text{Bernoulli}(f_i(A))$. We denote the best arm as $S^\prime$ and $S := \arg\min_{S \in \mathcal{U}_{\mathcal{E}},S\not=S'} \tilde{\bar{\tau}}_T^1(S,S^\prime) \mathbb{E}_{\nu_1}[\N^T_S]$, where $\N^T_S = \sum_{t=1}^T \bone\{S_t = S\}$ and $\tilde{\bar{\tau}}_T^1(S,S^\prime) := \frac{1}{N}\sum_{i \in \mathcal{U}} \sum_{A \in \mathcal{K}^{\U}} f_{i}(A) \big(\mathbb{P}(A_t = A\mid S)  
-  \mathbb{P}(A_t = A \mid S^\prime ) \big)$. 
The difference in treatment effect between $S$ and $S'$,
\[
\bar{\tau}_T^{\nu_1}(S, S') := \frac{1}{T}\frac{1}{N}\sum_{t=1}^T\sum_{i \in \mathcal{U}} \big(\Tilde{Y}_{i,t}(S) - \Tilde{Y}_{i,t}(S')\big),
\]
can be equivalently expressed as (for brevity, we use $\bar{\tau}_1$ to denote $\bar{\tau}_T^{\nu_1}(S, S')$ in the subsequent discussion)
\begin{equation}
\begin{aligned}
  &\bar{\tau}_1 = \frac{1}{T}\frac{1}{N}\sum_{t=1}^T\sum_{i \in \mathcal{U}} \sum_{A \in \mathcal{K}^{\U}} Y_{i,t}(A) \Big(\mathbb{P}(A_t = A\mid S)  
-  \mathbb{P}(A_t = A \mid S^\prime ) \Big).
\end{aligned}\label{delta_1}
\end{equation}
Based on the fact that $\frac{1}{N}\sum_{i \in \mathcal{U}} \sum_{A \in \mathcal{K}^{\U}} Y_{i,t}(A) \big(\mathbb{P}(A_t = A\mid S)  
-  \mathbb{P}(A_t = A \mid S^\prime ) \big)$
is $1$-sub-Gaussian, and for all $t\in[T]$
\[
\bE\!\left[\frac{1}{N}\sum_{i \in \mathcal{U}} \sum_{A \in \mathcal{K}^{\U}} Y_{i,t}(A) \Big(\mathbb{P}(A_t = A\mid S)  
-  \mathbb{P}(A_t = A \mid S^\prime ) \Big)\right] 
= \frac{1}{N}\sum_{i \in \mathcal{U}} \sum_{A \in \mathcal{K}^{\U}} f_{i}(A) \Big(\mathbb{P}(A_t = A\mid S)  
-  \mathbb{P}(A_t = A \mid S^\prime ) \Big),
\]
the Hoeffding inequality implies that, with probability at least $1-\tfrac{1}{T}$,
\begin{align}\label{concen1}
     \tilde{\bar{\tau}}_1 + \sqrt{\tfrac{2\log(2T)}{T}} \ge \bar{\tau}_1.
\end{align}
 On the other hand, we construct another instance as ($\beta\in (0,1)$ is chosen as sufficiently small)
  \begin{equation}
        \begin{aligned}
          Y^\prime_{i,t}(A) := \begin{cases}
                \text{Bernoulli}(f_i(A)) &\forall A\text{~satisfying~} \mathbb{P}(A_t = A \mid S ) = 0,   \\
               \text{Bernoulli}(f_i(A)-\beta) & \forall A\text{~satisfying~} \mathbb{P}(A_t = A \mid S )  > 0.
          \end{cases}
        \end{aligned}\label{construct}
        \end{equation}
 It leads to
\begin{equation}
\begin{aligned}
\nonumber
&\bar{\tau}_2 = \bar{\tau}_{2,=0} + \bar{\tau}_{2,>0}, \text{~where~}\\
&\bar{\tau}_{2,=0} := \frac{1}{T}\frac{1}{N}\sum_{t=1}^T \sum_{i \in \mathcal{U}} \sum_{A \in \mathcal{K}^{\U}} Y^\prime_{i,t}(A)  \Big(\mathbb{P}(A_t = A\mid S)  
-  \mathbb{P}(A_t = A \mid S^\prime ) \Big)\mathbf{1}\{\mathbb{P}( A_t = A \mid S ) = 0\},\\
&\bar{\tau}_{2,>0} := \frac{1}{T}\frac{1}{N}\sum_{t=1}^T \sum_{i \in \mathcal{U}} \sum_{A \in \mathcal{K}^{\U}} Y^\prime_{i,t}(A) \Big(\mathbb{P}(A_t = A\mid S)  
-  \mathbb{P}(A_t = A \mid S^\prime ) \Big)\mathbf{1}\{\mathbb{P}( A_t = A \mid S ) > 0\}.
\end{aligned}
\end{equation}
Follow the similar argument as Eq~\eqref{concen1}, we have with probability at least $1-\frac{1}{T}$
\begin{align}\label{concen2}
    \bar{\tau}_2 \ge \tilde{\bar{\tau}}_2 - \sqrt{\frac{2\log(2T)}{T}},
\end{align}
where
\begin{equation}
\begin{aligned}
\nonumber
&\tilde{\bar{\tau}}_2 = \tilde{\bar{\tau}}_{2,=0} + \tilde{\bar{\tau}}_{2,>0}, \text{~where~}\\
&\tilde{\bar{\tau}}_{2,=0} := \frac{1}{N} \sum_{i \in \mathcal{U}} \sum_{A \in \mathcal{K}^{\U}} f_{i}(A)  \Big(\mathbb{P}(A_t = A\mid S)  
-  \mathbb{P}(A_t = A \mid S^\prime ) \Big)\mathbf{1}\{\mathbb{P}( A_t = A \mid S ) = 0\},\\
&\tilde{\bar{\tau}}_{2,>0} := \frac{1}{N} \sum_{i \in \mathcal{U}} \sum_{A \in \mathcal{K}^{\U}} (f_{i}(A) - \beta) \Big(\mathbb{P}(A_t = A\mid S)  
-  \mathbb{P}(A_t = A \mid S^\prime ) \Big)\mathbf{1}\{\mathbb{P}( A_t = A \mid S ) > 0\}.
\end{aligned}
\end{equation}
Define event $\bbE_G :=\{    \tilde{\bar{\tau}}_1 + \sqrt{\tfrac{2\log(2T)}{T}} \ge \bar{\tau}_1, ; \bar{\tau}_2 \ge \tilde{\bar{\tau}}_2 - \sqrt{\frac{2\log(2T)}{T}} \}$. Under this event:
\begin{equation}
    \begin{aligned}
    \nonumber
\bar{\tau}_2 - \bar{\tau}_1 \ge & -2\sqrt{\frac{2\log(2T)}{T}} + \frac{1}{N} \sum_{i \in \mathcal{U}} \sum_{A \in \mathcal{K}^{\U}} (- \beta) \Big(\mathbb{P}(A_t = A\mid S)  
-  \mathbb{P}(A_t = A \mid S^\prime ) \Big)\mathbf{1}\{\mathbb{P}( A_t = A \mid S ) > 0\}\\
=& - 2\sqrt{\frac{2\log(2T)}{T}} -\frac{1}{N}\sum_{i \in \mathcal{U}} \sum_{A \in \mathcal{K}^{\U}}  \beta \Big(\mathbb{P}(A_t = A\mid S)   \Big)\mathbf{1}\{\mathbb{P}( A_t = A \mid S ) > 0\} \\ =& -2\sqrt{\frac{2\log(2T)}{T}} - \beta.
    \end{aligned}
\end{equation}

    On this basis, given any pre-specified estimator and strategy, which is recorded as $\{\hat{\Delta}_t\}_{ t \in [T]}$, following~\citet{zhang2024online, simchi2023multi}, we establish a hypothesis test as $\psi(\hat{\Delta}_T) = \arg\min_{i=1,2}|\hat{\Delta}_T - \bar{\tau}_i|$, implying that $\psi(\hat{\Delta}_T) \neq i, i\in \{1,2\}$ is a sufficient condition of $|\hat{\Delta}_T - \bar{\tau}_i| \geq \frac{1}{2}{\beta} + \sqrt{\frac{2\log(2T)}{T}}$. Therefore
\begin{equation}
    \begin{aligned}
    \inf _{\hat{\Delta}_T} \max _{\nu \in \mathcal{E}_0} \mathbb{P}_\nu\left(|\hat{\Delta}_T-\bar{\tau}_\nu| \geq \frac{1}{2}{\beta} + \sqrt{\frac{2\log(2T)}{T}} \right)  &\geq \inf _{\hat{\Delta}_T} \max _{i \in\{1,2\}} \mathbb{P}_{\nu_i}\left(|\hat{\Delta}_T-\bar{\tau}_i| \geq \frac{1}{2}{\beta} + \sqrt{\frac{2\log(2T)}{T}} \right) \\ &
 \geq \inf _{\hat{\Delta}_T} \max _{i \in\{1,2\}} \mathbb{P}_{\nu_i}\left(\psi(\hat{\Delta}_T ) \neq i\right) 
 \\ & \geq \inf _\psi \max _{i \in\{1,2\}} \mathbb{P}_{\nu_i}(\psi \neq i).
\end{aligned}\label{hypo}
\end{equation}
The above equation can directly lead to
\begin{equation}
    \begin{aligned}
    \text{RHS~of~}\eqref{hypo} \geq  \frac{1}{2}(1-\text{TV}(\mathbb{P}_{\nu_1}, \mathbb{P}_{\nu_2}))
\geq \frac{1}{2}\bigg[1-\sqrt{\frac{1}{2}\text{KL}(\mathbb{P}_{\nu_1}, \mathbb{P}_{\nu_2})}\bigg].
    \end{aligned}\label{tv_kl}
\end{equation}
Let $\bP_{\nu,S}(\cdot)$ denotes the reward density distribution conditioning on arm $S$ in $\nu$. Due to the fact that $\text{KL}(\mathbb{P}_{\nu_1}, \mathbb{P}_{\nu_2}) = \mathbb{E}_{\nu_1}[\N^T_S] \text{KL}( \mathbb{P}_{\nu_1,S}(\cdot), \mathbb{P}_{\nu_2,S}(\cdot))$, and 
\begin{equation}
\begin{aligned}
\text{KL}( \mathbb{P}_{\nu_1,S}(\cdot), \mathbb{P}_{\nu_2,S}(\cdot)) = \int_{ X} p_{\nu_1,S}(X) log\bigg(\frac{p_{\nu_1,S}(X)}{p_{\nu_2,S}(X)}\bigg)dX \leq q \beta^2 N,
\end{aligned}
\end{equation}
where $q > 0$ is a constant. It achieves that
\begin{equation}
\begin{aligned}
\text{KL}(\mathbb{P}_{\nu_1}, \mathbb{P}_{\nu_2}) & = \mathbb{E}_{\nu_1}[\N^T_S] \text{KL}( \mathbb{P}_{\nu_1,S}(\cdot), \mathbb{P}_{\nu_2,S}(\cdot))\\ & \leq q \beta^2 N \mathbb{E}_{\nu_1}[\N^T_S] \\ & \leq q \beta^2 N \frac{ \mathcal{R}^{stoc}_{\nu_1}(T, \pi)}{|\mathcal{U}_{\mathcal{E}}||\tilde{\bar{\tau}}_1|}\\
& \leq q \bigg(\beta + 2\sqrt{\frac{2\log(2T)}{T}} \bigg)^2 N \frac{ \mathcal{R}^{stoc}_{\nu_1}(T, \pi)}{|\mathcal{U}_{\mathcal{E}}||\tilde{\bar{\tau}}_1|},
\end{aligned}\label{kl_upper}
\end{equation}
where $\mathcal{R}^{\mathrm{stoc}}_{\nu_1}(\cdot)$ denotes the regret defined in the stochastic bandit setting under instance $\nu_1$. Here the last inequality is due to the definition of $S$. Combining~\eqref{tv_kl}-\eqref{kl_upper}, it implies
\begin{equation}
\begin{aligned}
    &\inf _{\hat{\Delta}_T} \max _{\nu \in \mathcal{E}_0} \mathbb{P}_\nu\left(\max_{S_i,S_j \in \U_{\mathcal{E}}}|\hat{\Delta}_T(S_i,S_j)-\bar{\tau}^\nu_T(S_i,S_j)| \geq \frac{\beta}{2} + \sqrt{\frac{2\log(2T)}{T}} \right) \\ \geq & \frac{1}{2}\bigg[1-\sqrt{\frac{1}{2}  q \Big(\beta + 2\sqrt{\frac{2\log(2T)}{T}}\Big)^2 N \frac{\mathcal{R}^{stoc}_{\nu_1}(T, \pi)}{|\mathcal{U}_{\mathcal{E}}| |\tilde{\bar{\tau}}_{1}|}}\bigg].
\end{aligned}\label{lower_estimation}
\end{equation}
Moreover, we aim to relate the regret in adversarial and stochastic settings. 
For any feasible stochastic instance $\nu$, obtained for example by Bernoulli sampling of $Y_{i,t}(A)$, we have
\begin{equation}\label{lower_regret}
    \mathcal{R}_{\nu}(T,\pi) \;\geq\; \mathcal{R}^{\mathrm{stoc}}_{\nu}(T,\pi),
\end{equation}
where the inequality follows from Jensen's inequality. Combining~\eqref{lower_estimation}-\eqref{lower_regret}, we get under event $\bbE_G$
\begin{equation}
    \begin{aligned}
      & \inf _{\hat{\Delta}_T} \max _{\nu \in \mathcal{E}_0} \mathbb{P}_\nu\left(\max_{S_i,S_j \in \U_{\mathcal{E}}}|\hat{\Delta}_T(S_i,S_j)-\bar{\tau}^\nu_T(S_i,S_j)| \geq \frac{\beta}{2} + \sqrt{\frac{2\log(2T)}{T}} \right)  \\ \geq & \frac{1}{2}\bigg[1-\sqrt{\frac{1}{2}  q \Big(\beta + 2\sqrt{\frac{2\log(2T)}{T}}\Big)^2 N \frac{\mathcal{R}_{\nu_1}(T, \pi)}{|\mathcal{U}_{\mathcal{E}}| |\tilde{\bar{\tau}}_{1}|}}\bigg].
    \end{aligned}
\end{equation}
As a consequence,
\begin{equation}
\begin{aligned}
&\inf _{\hat{\Delta}_T} \max _{\nu \in \mathcal{E}_0} \mathbb{E}_\nu\left(\max_{S_i,S_j \in \U_{\mathcal{E}}}|\hat{\Delta}_T(S_i,S_j)-\bar{\tau}^\nu_T(S_i,S_j)| \right) \sqrt{\mathcal{R}_{\nu_1}(T, \pi)} \\
\geq& 
 \frac{1}{2}\Big(1-\frac{2}{T}\Big) \bigg( \frac{\beta}{2} + \sqrt{\frac{2\log(2T)}{T}} \bigg)
 \bigg[1-\sqrt{\frac{1}{2}  q \Big(\beta + 2\sqrt{\frac{2\log(2T)}{T}}\Big)^2 N \frac{\mathcal{R}_{\nu_1}(T, \pi)}{|\mathcal{U}_{\mathcal{E}}| |\tilde{\bar{\tau}}_{1}|}}\bigg]\sqrt{\mathcal{R}_{\nu_1}(T, \pi)}.
\end{aligned}\label{final}
\end{equation}
When we choose $\beta$ such that $q \Big(\beta + 2\sqrt{\frac{\log(T/2)}{2T}}\Big)^2 N \frac{\mathcal{R}_{\nu_1}(T, \pi)}{|\mathcal{U}_{\mathcal{E}}| |\tilde{\bar{\tau}}_{1}|} = \frac{1}{2}$, it follows
    \begin{equation}
    \begin{aligned}
    \eqref{final} =&  
 \frac{1 }{8}\Big(1-\frac{2}{T}\Big)\sqrt{\frac{|\U_{\mathcal{E}}||\tilde{\bar{\tau}}_{1}|}{2 q N}} = \Omega_{K,T}(\sqrt{{|\U_{\mathcal{E}}|}{}}) .
    \end{aligned}
\end{equation}

\end{proof}

\section{Proof of Theorem \ref{CS}}

The following lemma is important in the proof of Theorem \ref{CS}:

\begin{lemma} \label{lemmacondition}
   Following the setting in Theorem \ref{CS}, for all $S_i, S_j \in \mathcal{U}_\bbE$, the sequence $\{\hat{\tau}_t(S_i, S_j)\}_{t=1}^\infty$ satisfies the Lindeberg-type uniform integrability condition (Condition L2 of Proposition 2.5) outlined by \citet{waudby2021time}, i.e., there exists $\beta \in (0,1)$ such that
\begin{align}
\nonumber
\sum_{t=1}^{\infty} \frac{\mathbb{E} \big[ \big(\hat{\tau}_t(S_i,S_j) - \tau_t(S_i,S_j)\big)^2 \mathbf{1}\big\{ \big(\hat{\tau}_t(S_i,S_j) - \tau_t(S_i,S_j)\big)^2 > \big(\V_t(S_i,S_j)\big)^{\beta} \big\} \big]}{\big(\V_t(S_i,S_j)\big)^{\beta}} < \infty \quad \text{a.s.,}
\end{align}
where $\V_t(S_i,S_j) = \sum_{t^\prime=1}^{t} \textbf{V}\big(\hat{\tau}_{t^\prime}(S_i,S_j) \mid \mathcal{F}_{t^\prime}\big)$ is the cumulative conditional variance.
\end{lemma}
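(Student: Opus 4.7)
The plan is to leverage two quantitative ingredients already available from the setup of Theorem~\ref{CS}: a deterministic, $t$-dependent sup-norm bound on the importance-weighted increments $\hat{\tau}_t(S_i,S_j)-\tau_t(S_i,S_j)$, together with the linear lower bound on $\V_t(S_i,S_j)$ supplied by Assumption~\ref{assumption1}. Combined, these will force the tail indicator $\mathbf{1}\{(\hat{\tau}_t-\tau_t)^2>\V_t^{\beta}\}$ to vanish for all sufficiently large $t$, so that only a finite prefix of the series is non-zero and the sum is almost surely finite.

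First, I would derive the sup-norm bound. Because the MAD mixture guarantees $\pi_t^{\mathrm{MAD}}(S)\ge \delta_t/|\U_\bbE|=1/(|\U_\bbE|\,t^{\alpha})$ for every $S\in\U_\bbE$, and because $R_t(S_t)\in[0,1]$ while only one of the two indicators in $\hat{\tau}_t(S_i,S_j)$ can fire (assuming $S_i\ne S_j$), one obtains $|\hat{\tau}_t(S_i,S_j)|\le |\U_\bbE|\,t^{\alpha}$. Combined with the trivial bound $|\tau_t(S_i,S_j)|\le 1$, this yields $\bigl(\hat{\tau}_t(S_i,S_j)-\tau_t(S_i,S_j)\bigr)^{2}\le C\,|\U_\bbE|^{2}\,t^{2\alpha}$ for an absolute constant $C$ and all $t\ge 1$.

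Second, by Assumption~\ref{assumption1} we have $\V_t(S_i,S_j)\ge c\,t$ almost surely for some (possibly random) $c>0$ and all $t$ large enough, so $\V_t^{\beta}\ge c^{\beta}\,t^{\beta}$ pathwise. Since $\alpha\in[0,\tfrac12)$, the interval $(2\alpha,1)$ is non-empty; I would pick any $\beta$ inside it. Then $C\,|\U_\bbE|^{2}\,t^{2\alpha}<c^{\beta}\,t^{\beta}$ for every $t\ge t^{\star}(\omega)$, which forces the indicator to be identically zero beyond a finite random threshold. The finitely many remaining terms are each bounded (note that $\V_t$ is strictly positive for $t\ge 1$ thanks to the deterministic floor $\delta_t/|\U_\bbE|>0$ in the MAD construction), so a bounded head plus a vanishing tail produces an almost surely finite sum, which is precisely Condition~L2 of~\citet{waudby2021time}.

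The main subtlety I anticipate is aligning the measurability of the lower-bound constant $c$ and of the threshold $t^{\star}(\omega)$ with the pathwise almost-sure statement in the lemma; one must verify that the linear lower bound from Assumption~\ref{assumption1} is a trajectory-wise statement rather than a statement in expectation, and that the $\mathcal{F}_t$-measurability of $\V_t$ allows us to treat it as a constant when bounding the summand. Modulo this bookkeeping, the argument is driven entirely by the algebraic slack $\beta-2\alpha>0$, which is exactly why the hypothesis $\alpha<\tfrac12$ appears in Theorem~\ref{CS}: it is precisely the regime in which the MAD-induced inflation of $|\hat{\tau}_t|$ is dominated by the $\V_t$-driven scaling that the Lindeberg-type tail condition compares against.
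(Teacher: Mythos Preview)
Your proposal is correct and follows essentially the same route as the paper: both bound $(\hat{\tau}_t-\tau_t)^2=O(t^{2\alpha})$ via the MAD floor $\pi_t^{\mathrm{MAD}}(S)\ge \delta_t/|\U_\bbE|$, invoke $\V_t=\Omega(t)$ from Assumption~\ref{assumption1}, choose $\beta\in(2\alpha,1)$, and conclude that the indicator vanishes beyond a finite index so the series is a finite sum. Your treatment is in fact slightly more careful than the paper's---you flag the pathwise-versus-expectation reading of Assumption~\ref{assumption1} and the positivity of $\V_t$ for small $t$, both of which the paper leaves implicit.
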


\begin{proof}[Proof of Lemma \ref{lemmacondition}] We first upper bound $\big(\hat{\tau}_t(S_i, S_j) - \tau_t(S_i, S_j)\big)^2$. Based on the definition of our IPW estimator, we have
\begin{align}
    \begin{split}
    \nonumber
    &\big(\hat{\tau}_t(S_i,S_j) - \tau_t(S_i,S_j)\big)^2\\ =& \bigg( \frac{\bone\{S_t = S_i\}R_t(S_i)}{\pi_t^{\text{MAD}}(S_i)} - \frac{\bone\{S_t = S_j\}R_t(S_j)}{\pi_t^{\text{MAD}}(S_j)} - \tau_t(S_i,S_j) \bigg)^2\\
    \le& \frac{4}{\big(\pi_t^{\text{MAD}}(S_i)\wedge\pi_t^{\text{MAD}}(S_j)\big)^2} + \frac{8}{\big(\pi_t^{\text{MAD}}(S_i)\wedge\pi_t^{\text{MAD}}(S_j)\big)} + 4 \\
    \le& \frac{16}{\big(\pi_t^{\text{MAD}}(S_i)\wedge\pi_t^{\text{MAD}}(S_j)\big)^2},
    \end{split}
\end{align}    
where the first inequality is due to $R_t(S) \in [0,1]$. Note that, based on the setup of Theorem \ref{CS}, we have $\frac{1}{(\pi^{\text{MAD}}_t(S))^2} = O(t^{2\alpha})$ for all $S \in \mathcal{U}_\bbE$. This implies that $\big(\hat{\tau}_t(S_i, S_j) - \tau_t(S_i, S_j)\big)^2 = O(t^{2\alpha})$. Furthermore, based on Assumption \ref{assumption1}, we have $\V_t(S_i, S_j) = \Omega(t)$. Therefore, by setting $\beta \in \big(2\alpha, 1\big)$, there always exists a finite time $t^\prime$ such that for all $t \geq t^\prime$, $\big(\hat{\tau}_t(S_i, S_j) - \tau_t(S_i, S_j)\big)^2 \leq \big(\V_t(S_i, S_j)\big)^{\beta}$, and
\begin{align}
\begin{split}
\nonumber
&\sum_{t=1}^{\infty} \frac{\mathbb{E} \big[ \big(\hat{\tau}_t(S_i,S_j) - \tau_t(S_i,S_j)\big)^2 \mathbf{1}\big\{ \big(\hat{\tau}_t(S_i,S_j) - \tau_t(S_i,S_j)\big)^2 > \big(\V_t(S_i,S_j)\big)^{\beta} \big\} \big]}{\big(\V_t(S_i,S_j)\big)^{\beta}} \\ = & \sum_{t=1}^{t^\prime} \frac{\mathbb{E} \big[ \big(\hat{\tau}_t(S_i,S_j) - \tau_t(S_i,S_j)\big)^2 \mathbf{1}\big\{ \big(\hat{\tau}_t(S_i,S_j) - \tau_t(S_i,S_j)\big)^2 > \big(\V_t(S_i,S_j)\big)^{\beta} \big\} \big]}{\big(\V_t(S_i,S_j)\big)^{\beta}} \\ < & \infty \quad \text{a.s.}
\end{split}
\end{align}
Here we finish the proof of Lemma \ref{lemmacondition}.
\end{proof}

Based on Lemma \ref{lemmacondition}, we can prove Theorem \ref{CS}.

\begin{proof}[Proof of Theorem \ref{CS}]
Based on Assumption \ref{assumption1}, Lemma \ref{lemmacondition}, and Proposition 2.5 in \citet{waudby2021time}, $\{\hat{\bar{\tau}}_t(S_i, S_j) \pm C_t(S_i, S_j)\}_{t = 1}^\infty$ constitutes an asymptotic ($1 - \tilde{\delta}$) CS, where
    \begin{align}
    \nonumber
 C_t(S_i,S_j) = \sqrt{\frac{2\big( \V_t(S_i,S_j)\eta^2 + 1 \big)}{t^2\eta^2}\log\bigg( \frac{\sqrt{\V_t(S_i,S_j)\eta^2 + 1}}{\tilde{\delta}} \bigg)}.
    \end{align}
   Besides, based on the definition of the variance, we know that $\V_t \le \tilde{\V}_t$, where $\tilde{\V}_t = \sum_{t^\prime = 1}^t \sigma_{t^\prime}^2(S_i, S_j) = \sum_{t^\prime = 1}^t \Big(\frac{(\bbY_t(S_i))^2}{\pi^{\text{MAD}}_{t^\prime}(S_i)} + \frac{(\bbY_t(S_j))^2}{\pi^{\text{MAD}}_{t^\prime}(S_j)}\Big)$. Therefore, $\{\hat{\bar{\tau}}_t(S_i, S_j) \pm \tilde{C}_t(S_i, S_j)\}_{t = 1}^\infty$ is also an asymptotic ($1 - \tilde{\delta}$) CS, where
    \begin{align}
    \nonumber
 C_t(S_i,S_j) \le \tilde{C}_t(S_i,S_j) = \sqrt{\frac{2\big( \tilde{\V}_t(S_i,S_j)\eta^2 + 1 \big)}{t^2\eta^2}\log\bigg( \frac{\sqrt{\tilde{\V}_t(S_i,S_j)\eta^2 + 1}}{\tilde{\delta}} \bigg)}.
    \end{align}

Define $\hat{\sigma}_{t}^2(S_i, S_j) = \bigg( \frac{1}{\pi^{\text{MAD}}_{t}(S_i)} + \frac{1}{\pi^{\text{MAD}}_{t}(S_j)} \bigg)$ as the estimator of $\sigma_t^2(S_i, S_j)$, and let $\hat{\V}_t = \sum_{t^\prime = 1}^t \hat{\sigma}_{t^\prime}^2(S_i, S_j)$. Since $\hat{\V}_t \ge \tilde{\V}_t$, the sequence 
$\{\hat{\bar{\tau}}_t(S_i, S_j) \pm \hat{C}_t(S_i, S_j)\}_{t=1}^\infty$ 
forms an asymptotic $(1 - \tilde{\delta})$ confidence sequence, where
\begin{align}
\nonumber
   \hat{C}_t(S_i,S_j) = \sqrt{\frac{2\big( \hat{\V}_t(S_i,S_j)\eta^2 + 1 \big)}{t^2\eta^2}\log\bigg( \frac{\sqrt{\hat{\V}_t(S_i,S_j)\eta^2 + 1}}{\tilde{\delta}} \bigg)}.
\end{align}

We finally show that $\hat{C}_t(S_i,S_j) = \tilde{O}\big(\vert \U_\bbE \vert^{\frac{1}{2}} t^{\frac{\alpha - 1}{2}} \big)$ for all $S_i, S_j \in \U_\bbE$. We first upper bound $\hat{\V}_t(S_i,S_j)$, i.e., 
\begin{align}
\begin{split}
\nonumber
    \hat{\V}_t(S_i,S_j) & = \sum_{t^\prime = 1}^t \bigg( \frac{1}{\pi^{\text{MAD}}_{t}(S_i)} + \frac{1}{\pi^{\text{MAD}}_{t}(S_j)} \bigg) \\
    & \le \sum_{t^\prime = 1}^t (2\vert \U_\bbE \vert{t^{\prime}}^{\alpha}) \\
    & = O\big(\vert \U_\bbE \vert{t^{1 + \alpha}}\big).
\end{split}
\end{align}
Then 
\begin{align}
\nonumber
\hat{C}_t(S_i,S_j) = O\Bigg( \sqrt{\frac{\big(\vert \U_\bbE \vert t^{1 + \alpha}\eta^2 + 1 \big)}{t^2\eta^2}\log\bigg( \frac{\sqrt{\vert \U_\bbE \vert t^{1 + \alpha}\eta^2 + 1}}{\tilde{\delta}} \bigg)} \Bigg) = \tilde{O}\big(\vert \U_\bbE \vert^{\frac{1}{2}} t^{\frac{\alpha - 1}{2}} \big),
\end{align}
and it will converge to $0$ when $t \rightarrow \infty$. This concludes the proof of Theorem \ref{CS}.
\end{proof}

\section{Proof of Theorem \ref{ATEbound}}

\begin{proof}[Proof of Theorem \ref{ATEbound}, Claim (i)]
    Based on the result in Theorem \ref{CS}, for all $S_i \not = S_j$, with probability at least $1 - \tilde{\delta}$, we have
    \begin{align}
    \begin{split}
    \nonumber
       \vert \hat{\Delta}^{(i,j)} - \Delta^{(i,j)} \vert &\le 2\hat{C}_T(S_i, S_j)\\
       &= 2\sqrt{\frac{2\big( \hat{\V}_T(S_i,S_j)\eta^2 + 1 \big)}{T^2\eta^2}\log\bigg( \frac{\sqrt{\hat{\V}_T(S_i,S_j)\eta^2 + 1}}{\tilde{\delta}} \bigg)} \\
       & = \tilde{O}\Big(\vert\U_\bbE\vert T^{\alpha - \frac{1}{2}}\Big),
    \end{split}
    \end{align}
    where the first inequality is owing to Theorem \ref{CS}, and the last inequality is owing to the definition of $\hat{\V}_T(S_i,S_j)$. Finally, set $\tilde{\delta} = 1/T$, we have
    \begin{align}
    \nonumber
    \mathbb{E}[\vert \hat{\Delta}^{(i,j)} - \Delta^{(i,j)} \vert] \le 2(1 - \tilde{\delta})\hat{C}_t(S_i, S_j) + \tilde{\delta} T = \tilde{O}\Big(\vert\U_\bbE\vert T^{\alpha - \frac{1}{2}}\Big).
    \end{align}
\end{proof}

\begin{proof}[Proof of Theorem \ref{ATEbound} Claim (ii)] 
Let $\bE_{\text{MAD}}[\cdot]$ and $\bE_{\text{ALG}}[\cdot]$ denote the expectations taken with respect to the MAD and ALG (EXP3), respectively. Recall the definition of regret:
\begin{align}
    \R(T, \pi^{\text{MAD}}) = \max_{S\in\U_\bbE} \sum_{t=1}^T \bbY_t(S) - \mathbb{E}_{\text{MAD}}\left[\sum_{t=1}^T R_t(S_t)\right],
\end{align}
we also define
\begin{align}\label{RTI}
\begin{split}
\R(T, \pi^{\text{MAD}}, i) = \sum_{t=1}^T \bbY_t(S_i) - \mathbb{E}_{\text{MAD}}\left[\sum_{t=1}^T R_t(S_t)\right].
\end{split}
\end{align}
As the ''regret" assuming a fixed super arm $S_i$ is optimal for all $T$ rounds, while $\R(T, \pi)$ measures the actual regret relative to the best super arm at each round. If we can establish that $\R(T,\pi^{\text{MAD}},i) = \tilde{O}(\sqrt{\vert \U_\bbE \vert T} + T^{1 - \alpha})$ for all $S_i \in \U_\bbE$, it follows directly that $\R(T,\pi^{\text{MAD}}) = \tilde{O}(\sqrt{\vert \U_\bbE \vert T} + T^{1 - \alpha})$.

Based on the definition of the MAD, we can decompose Eq~\eqref{RTI} as
\begin{align}
\begin{split}
    \R(T, \pi^{\text{MAD}}, i) &= \sum_{t=1}^T \bbY_t(S_i) - \mathbb{E}_{\text{MAD}}\left[\sum_{t=1}^T R_t(S_t)\right]\\
    &= \sum_{t=1}^T \bbY_t(S_i) - \sum_{t = 1}^T  \Bigg( \delta_t \bigg( \frac{\sum_{S\in\U_\bbE}\bbY_t(S)}{\vert \U_\bbE \vert} \bigg) + (1 - \delta_t) \bE_{\text{ALG}}[R_t(S_t)] \Bigg)\\
    & = \R(T,\pi^{\text{ALG}},i) + \sum_{t = 1}^T \delta_t \bigg( \bE_{\text{ALG}}[R_t(S_t)] - \frac{\sum_{S\in\U_\bbE}\bbY_t(S)}{\vert \U_\bbE \vert} \bigg)\\
    & \le \R(T,\pi^{\text{ALG}},i) + 2T^{1 - \alpha},
\end{split}
\end{align}
where the third inequality is owing to the definition that $\R(T,\pi^{\text{ALG}},i) = \sum_{t=1}^T \bbY_t(S_i) - \sum_{t = 1}^T \bE_{\text{ALG}}[R_t(S_t)]$. We further decompose $\R(T, \pi^{\text{ALG}}, i)$. Let $M$ be such that $T \in \mathcal{L}_M$, and define $\R(\mathcal{L}_m, \pi^{\text{ALG}}, i)$ as $\R(\mathcal{L}_m, \pi^{\text{ALG}}, i) = \sum_{t \in \mathcal{L}_m} \bbY_t(S_i) - \mathbb{E}_{\text{MAD}}\left[\sum_{t \in \mathcal{L}_m} R_t(S_t)\right]$. It follows directly that $\R(T, \pi^{\text{ALG}}, i) \leq \sum_{m=1}^M \R(\mathcal{L}_m, \pi^{\text{ALG}}, i)$.

We now focusing on upper bound $\R(\mathcal{L}_m,\pi^{\text{ALG}},i)$. Set $\hat{R}_{\mathcal{L}_m,t_m-1}(S) = 0$ for all $S\in\U_\bbE$. Based on the unbiasedness of the IPW estimator, we have:
\begin{align}\label{IPWPROPER}
\begin{split}
\mathbb{E}_{\text{ALG}}[\hat{R}_{\mathcal{L}_m,t + 2^{m-1} - 1}(S)] &= \sum_{t \in \mathcal{L}_m} \bbY_t(S),\  \forall S \in \U_\bbE, \text{and} \\ \mathbb{E}_{\text{ALG}}\Big[ R_t(S_t) \Big\vert \bbH_{t-1} \Big] & = \sum_{S\in\U_\bbE} \pi^{\text{ALG}}_t(S) \bbY_t(S) \\ &= \sum_{S \in \U_\bbE} \pi^{\text{ALG}}_t(S) \mathbb{E}_{\text{ALG}}\Big[ \hat{R}_{\mathcal{L}_m,t}(S) - \hat{R}_{\mathcal{L}_m,t-1}(S) \Big\vert \bbH_{t-1}\Big],\ \forall t \in \mathcal{L}_m.
\end{split}
\end{align}
According to Eq~\eqref{IPWPROPER}, Eq~\eqref{RTI} can be rewritten as
\begin{align}
\begin{split}
\nonumber
    \R(\mathcal{L}_m,\pi^\text{ALG},i) & = \bE_{\text{ALG}} [\hat{R}_{\mathcal{L}_m,t_m + 2^{m-1} - 1}(S_i)] - \bE_{\text{ALG}} \bigg[ \sum_{t\in\mathcal{L}_m} R_t(S_t) \bigg]\\    & = \bE_{\text{ALG}} [\hat{R}_{\mathcal{L}_m,t_m + 2^{m-1} - 1}(S_i)] - \bE_{\text{ALG}} \Bigg[ \bE_{\text{ALG}} \bigg[ \sum_{t\in\mathcal{L}_m} R_t(S_t) \bigg\vert \bbH_{t-1} \bigg] \Bigg] \\
    & = \bE_{\text{ALG}} [\hat{R}_{\mathcal{L}_m,t_m + 2^{m-1} - 1}(S_i)] - \bE_{\text{ALG}} \Bigg[ \sum_{t \in \mathcal{L}_m} \sum_{S \in \U_\bbE} \pi^{\text{ALG}}_t(S)  \bE_{\text{ALG}} \bigg[\Big(\hat{R}_{\mathcal{L}_m,t}(S) - \hat{R}_{\mathcal{L}_m,t-1}(S)\Big) \Big\vert \bbH_{t-1} \bigg] \Bigg]\\
    & = \bE_{\text{ALG}} \Bigg[ \hat{R}_{\mathcal{L}_m,t_m + 2^{m-1} - 1}(S_i) - \sum_{t = 1}^T \sum_{S \in \U_\bbE} \pi^{\text{ALG}}_t(S)  \Big(\hat{R}_{\mathcal{L}_m,t}(S) - \hat{R}_{\mathcal{L}_m,t-1}(S)\Big) \Bigg]\\
    & = \bE_{\text{ALG}} \big[ \hat{R}_{\mathcal{L}_m,t_m + 2^{m-1} - 1}(S_i) - \hat{R}_{\mathcal{L}_m} \big],
\end{split}
\end{align}
where the first and third equalities follow from the tower rule, while the last equality holds due to our definition: $\hat{R}_{\mathcal{L}_m} = \sum_{t \in \mathcal{L}_m} \sum_{S \in \U_\bbE} \pi^{\text{ALG}}_t(S)  \big(\hat{R}_{\mathcal{L}_m,t}(S) - \hat{R}_{\mathcal{L}_m,t-1}(S)\big)$.

For $t \in \mathcal{L}_m$, we define $W_t = \sum_{S \in \U_\bbE} \exp\big(\epsilon_m \hat{R}_{\mathcal{L}_m,t}(S)\big)$. Consider the ratio between successive \(W_t\) and \(W_{t-1}\): $\frac{W_t}{W_{t-1}}$.
Using the definition of:  
\begin{align}
\pi^{\text{ALG}}_t(S) = \frac{\exp\big(\epsilon_m \hat{R}_{\mathcal{L}_m,t-1}(S)\big)}{W_{t-1}}.
\end{align}  
We rewrite the ratio as:
\begin{align}\label{eq19}
\begin{split}
\frac{W_t}{W_{t-1}} =& \sum_{S \in \U_\bbE} \pi_t^{\text{ALG}}(S) \exp\left(\epsilon_{m} \big(\hat{R}_{\mathcal{L}_m,t}(S) - \hat{R}_{\mathcal{L}_m,t-1}(S)\big)\right).
\end{split}
\end{align}

We now introduce two inequalities: 1) $\exp(x) \leq 1 + x + x^2$, $\forall x \leq 1$ ; 2) $1 + x \leq \exp(x)$, $\forall x > 0$. Based on these two inequalities, we can rewrite Eq~\eqref{eq19} as:
\begin{align}
\begin{split}
\nonumber
    & \sum_{S \in \U_\bbE} \pi_t^{\text{ALG}}(S) \exp\left(\epsilon_{m} \big(\hat{R}_{\mathcal{L}_m,t}(S) - \hat{R}_{\mathcal{L}_m,t-1}(S)\big)\right)\\
    \le& \bigg( 1 + \epsilon_{m} \sum_{S \in \U_\bbE} \pi_t^{\text{ALG}}(S)  \Big(\hat{R}_{\mathcal{L}_m,t}(S) - \hat{R}_{\mathcal{L}_m,t-1}(S)\Big)  + {\epsilon_{m}}^2 \sum_{S \in \U_\bbE} \pi_t^{\text{ALG}}(S)  \Big(\hat{R}_{\mathcal{L}_m,t}(S) - \hat{R}_{\mathcal{L}_m,t-1}(S)\Big)^2 \bigg)\\
    \le & \text{exp}\bigg( \epsilon_{m} \sum_{S \in \U_\bbE} \pi_t^{\text{ALG}}(S)  \Big(\hat{R}_{\mathcal{L}_m,t}(S) - \hat{R}_{\mathcal{L}_m,t-1}(S)\Big) + {\epsilon_{m}}^2 \sum_{S \in \U_\bbE} \pi^{\text{ALG}}_{t}(S)  \Big(\hat{R}_{\mathcal{L}_m,t}(S) - \hat{R}_{\mathcal{L}_m,t-1}(S) \Big)^2 \bigg).\\
\end{split}
\end{align}

Multiplying these ratios from \(t_m\) to \(t_m + 2^{m-1} -1\), we obtain:
\begin{align}
\nonumber
W_{t_m + 2^{m-1} -1} = \vert \U_\bbE \vert \prod_{t\in\mathcal{L}_m} \frac{W_t}{W_{t-1}} \leq \vert \U_\bbE \vert  \exp\left(\epsilon_{m} \hat{R}_{\mathcal{L}_m} +  {\epsilon_{m}}^2 \sum_{t\in\mathcal{L}_m} \sum_{S \in \U_\bbE} \pi^{\text{ALG}}_t(S)\Big(\hat{R}_{\mathcal{L}_m,t}(S) - \hat{R}_{\mathcal{L}_m,t-1}(S) \Big)^2\right).
\end{align}

Taking logarithms and rearranging the above equation, it yields:
\begin{align}
\nonumber
\hat{R}_{\mathcal{L}_m,t_m + 2^{m-1} - 1}(S_i)-\hat{R}_{\mathcal{L}_m} \le \frac{\log\big(\vert \U_\bbE \vert \big)}{\epsilon_m} + \epsilon_m\sum_{t\in\mathcal{L}_m} \sum_{S \in \U_\bbE} \pi^{\text{ALG}}_t(S)\Big(\hat{R}_{\mathcal{L}_m,t}(S) - \hat{R}_{\mathcal{L}_m,t-1}(S) \Big)^2.
\end{align}

Recalling the definition \( \R(\mathcal{L}_m, \pi^\text{ALG}, i) = \bE_{\text{{ALG}}}\big[ \hat{R}_{\mathcal{L}_m,t_m + 2^{m-1} - 1}(S_i) - \hat{R}_{\mathcal{L}_m} \big] \), we obtain:
\begin{align}
\nonumber
\R(\mathcal{L}_m, \pi^\text{ALG}, i) \leq \frac{\log\big(\vert \U_\bbE \vert \big)}{\epsilon_m} + \mathbb{E}_{\text{ALG}}\left[\epsilon_m\sum_{t\in\mathcal{L}_m} \sum_{S \in \U_\bbE} \pi^{\text{ALG}}_t(S)\Big(\hat{R}_{\mathcal{L}_m,t}(S) - \hat{R}_{\mathcal{L}_m,t-1}(S) \Big)^2\right].
\end{align}

We then try to bound $\mathbb{E}_{\text{ALG}}\left[\epsilon_m\sum_{t\in\mathcal{L}_m} \sum_{S \in \U_\bbE} \pi^{\text{ALG}}_t(S)\Big(\hat{R}_{\mathcal{L}_m,t}(S) - \hat{R}_{\mathcal{L}_m,t-1}(S) \Big)^2\right]$, there is 
\begin{align}
    \begin{split}
    \nonumber
       &\mathbb{E}_{\text{ALG}}\left[\epsilon_m\sum_{t\in\mathcal{L}_m} \sum_{S \in \U_\bbE} \pi^{\text{ALG}}_t(S)\Big(\hat{R}_{\mathcal{L}_m,t}(S) - \hat{R}_{\mathcal{L}_m,t-1}(S) \Big)^2\right]  \\ = & \bE_{\text{ALG}}\Bigg[ \epsilon_m \sum_{t \in \mathcal{L}_m}\sum_{S \in \U_\bbE} \pi^{\text{ALG}}_{t}(S) \bigg( 1 - \frac{\bone\{ S_t = S\} \big(1 - R_t(S)\big) }{\pi^{\text{ALG}}_{t}(S)} \bigg)^2\Bigg]\\
       = & \bE_{\text{ALG}}\bigg[ \epsilon_m \sum_{t \in \mathcal{L}_m}\sum_{S \in \U_\bbE} \pi^{\text{ALG}}_{t}(S) \Big( 1 - \frac{2 \times \bone\{ S_t = S\} \big( 1 - R_t(S) \big) }{\pi^{\text{ALG}}_{t}(S)} + \frac{\bone\{ S_t = S\} \big( 1 - R_t(S) \big)^2 }{\pi^{\text{ALG}}_{t}(S)^2} \Big)\bigg]\\
       = & \bE_{\text{ALG}}\Bigg[ \epsilon_m \sum_{t \in \mathcal{L}_m} \Big( 2R_t(S_t) \Big) + \bE_{\text{ALG}} \bigg[ \epsilon_m\sum_{t \in \mathcal{L}_m}\sum_{S \in \U_\bbE} \pi^{\text{ALG}}_{t}(S) \Big( \frac{\bone\{ S_t = S\} \big( 1 - R_t(S)\big)^2 }{\pi^{\text{ALG}}_{t}(S)^2} \Big) \bigg\vert \bbH_{t - 1} \bigg]\Bigg]\\
       = & \bE_{\text{ALG}}\bigg[ \epsilon_m \sum_{t \in \mathcal{L}_m} \Big(  2R_t(S_t) - 1 \Big) +  \epsilon_m\sum_{t \in \mathcal{L}_m} \sum_{S \in \U_\bbE} \big(1 - R_t(S)\big)^2  \bigg]\\
       \le & \vert \U_\bbE \vert 2^{m-1} \epsilon_m.
    \end{split}
\end{align}
Based on the definition of $\epsilon_m$, we conclude that $\R(\mathcal{L}_m, \pi^\text{ALG}, i) = \tilde{O}(\sqrt{\vert \U_\bbE \vert 2^{m-1}})$. We can upper bound $\R(T, \pi^\text{ALG}, i)$ by $\sum_{m=1}^M \R(\mathcal{L}_m, \pi^\text{ALG}, i) = \tilde{O}\left(\sum_{m=1}^M \sqrt{\vert \U_\bbE \vert 2^{m-1}} \right) = \tilde{O}(\sqrt{\vert \U_\bbE \vert} 2^{M/2})$. Owing to $M \le \log_2(T) + 1$, we have $\R(T, \pi^\text{ALG}, i) = \tilde{O}(\sqrt{\vert \U_\bbE \vert T})$. We can finally bound $\R(T, \pi^\text{MAD}, i)$ and $\R(T, \pi^\text{MAD})$ by $\tilde{O}(\sqrt{\vert \U_\bbE \vert T} + T^{1 - \alpha})$. 
\end{proof}

\begin{figure*}[t]
\centering
 \subfigbottomskip=4pt
	\subfigcapskip=-5pt 
	\subfigure[Network topology]{
\includegraphics[width=0.28\linewidth]{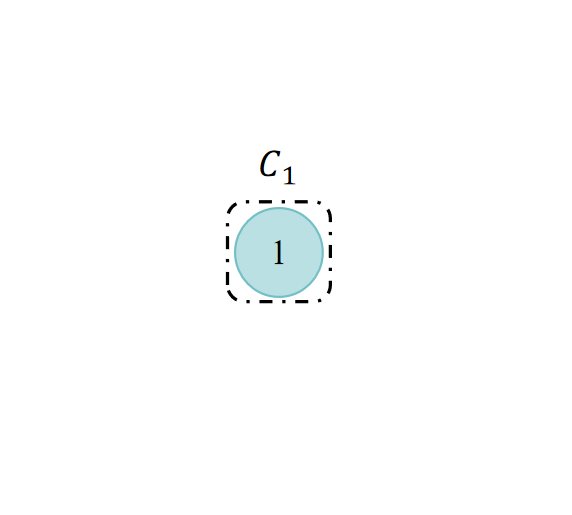}\label{fig3a}}\quad\quad
	\subfigure[Cumulative regret]{
\includegraphics[width=0.42\linewidth]{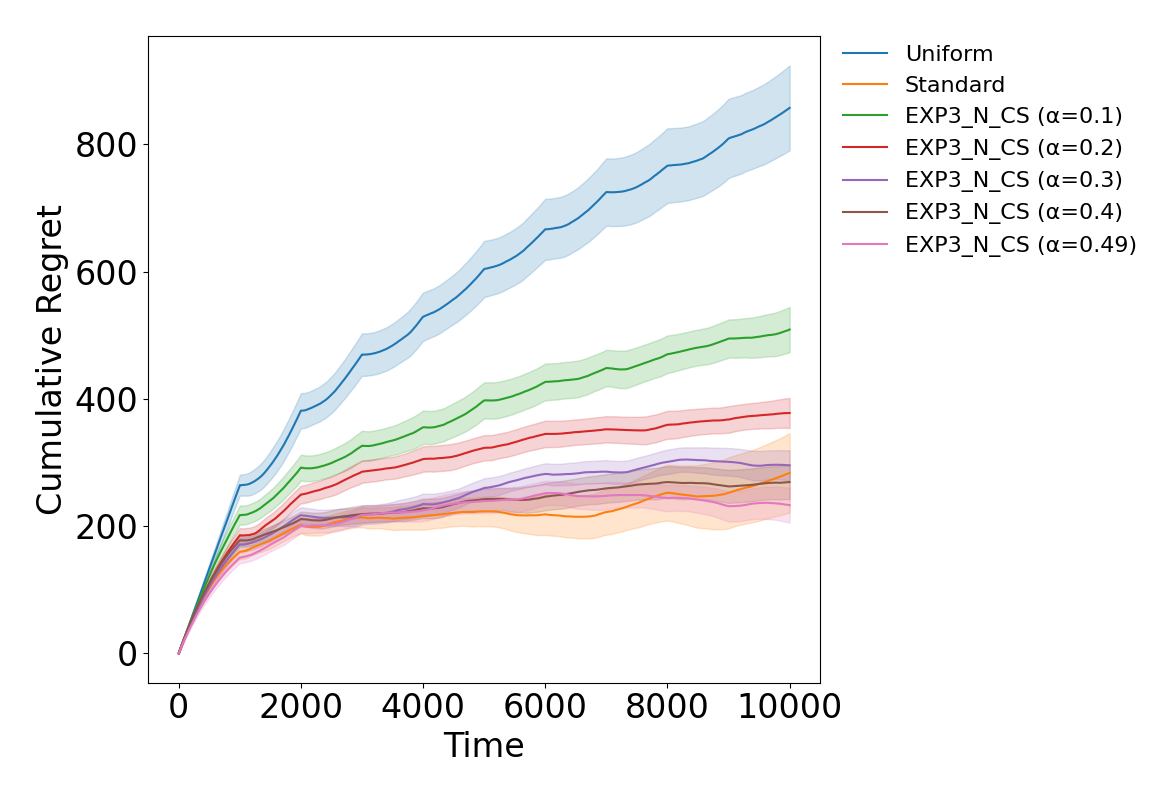}\label{fig3b}}\\
         \subfigure[CS width]{
\includegraphics[width=0.42\linewidth]{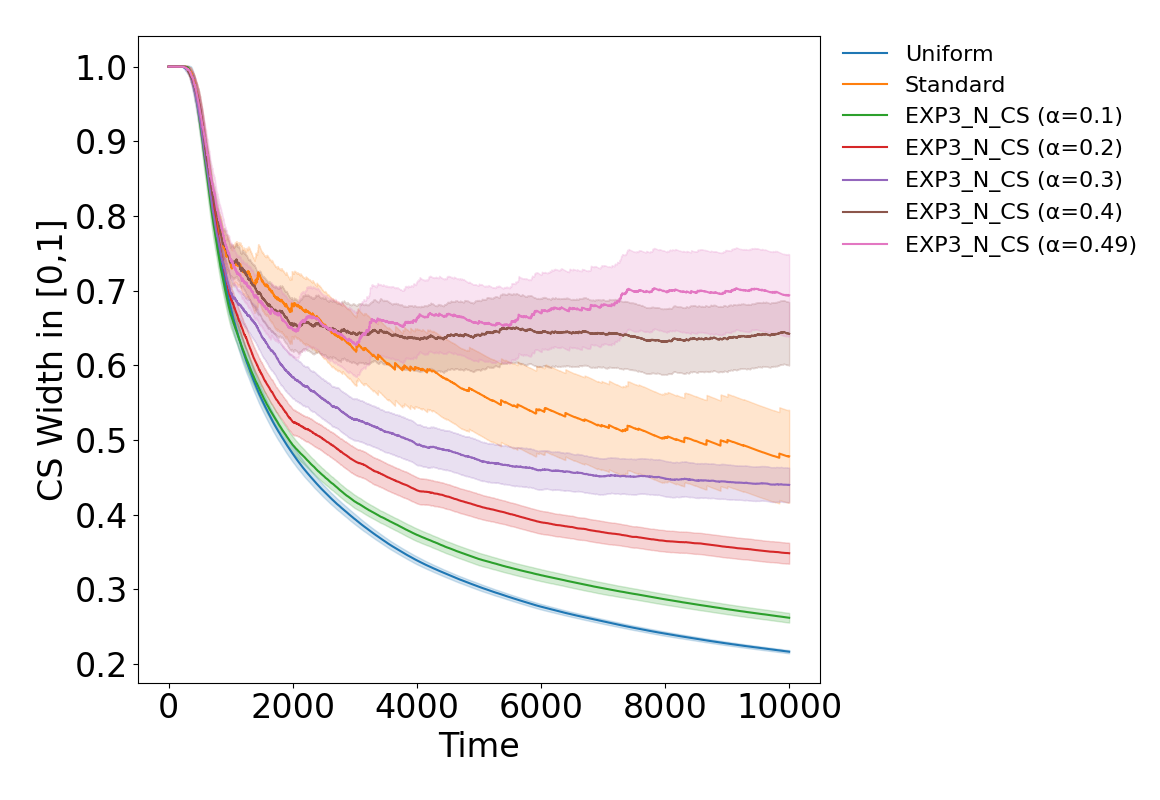}\label{fig3c}}
\subfigure[Maximum ATE estimation error]{
\includegraphics[width=0.45\linewidth]{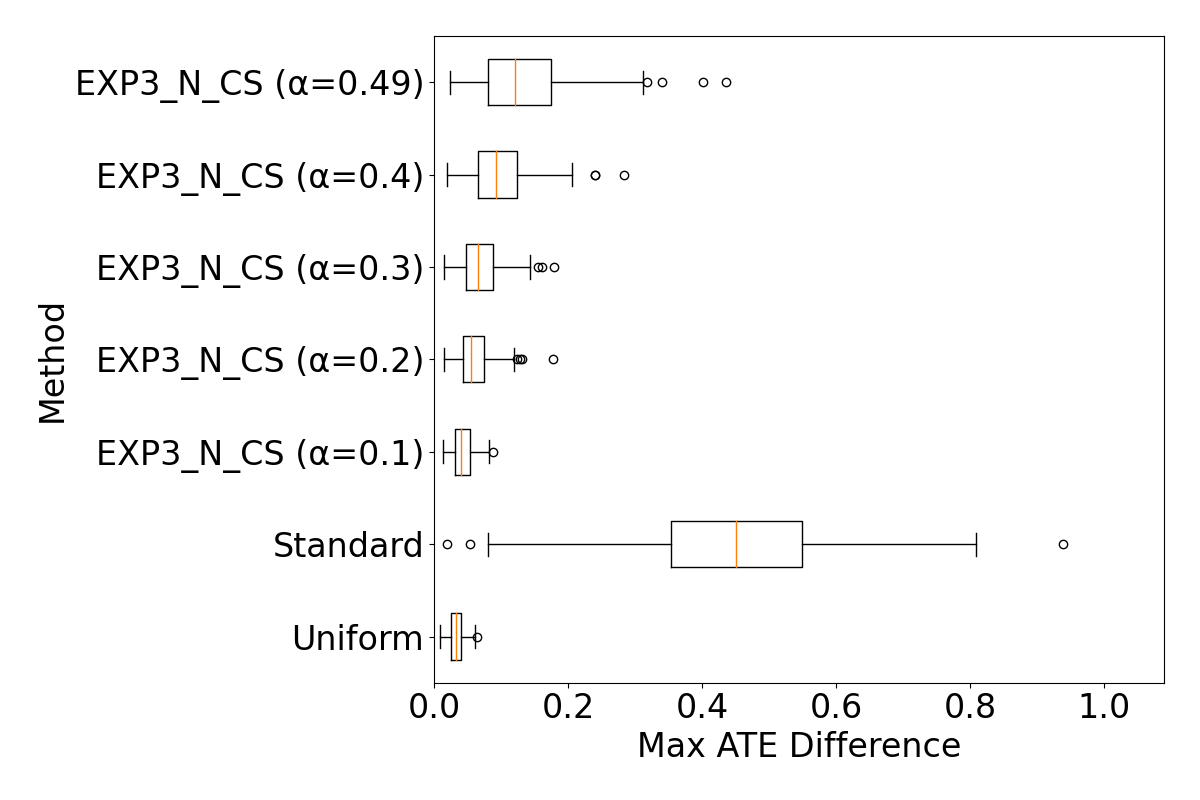}\label{fig3d}}
\caption{Experimental results of instance 1.}\label{fig3}
\end{figure*}

\begin{figure}[t]
\centering
 \subfigbottomskip=4pt
	\subfigcapskip=-5pt 
	\subfigure[Network topology]{
\includegraphics[width=0.28\linewidth]{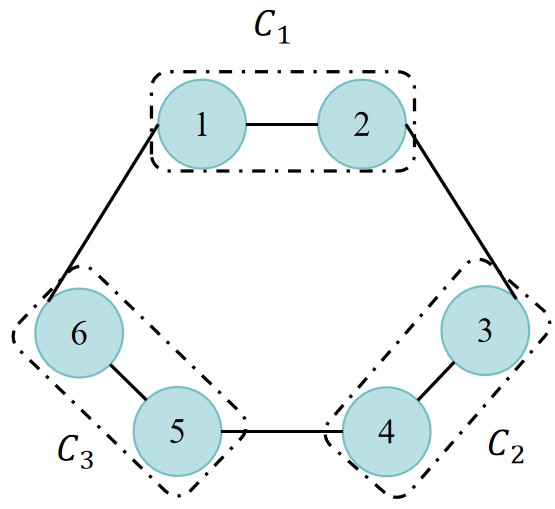}\label{fig4a}}\quad\quad
	\subfigure[Cumulative regret]{
\includegraphics[width=0.42\linewidth]{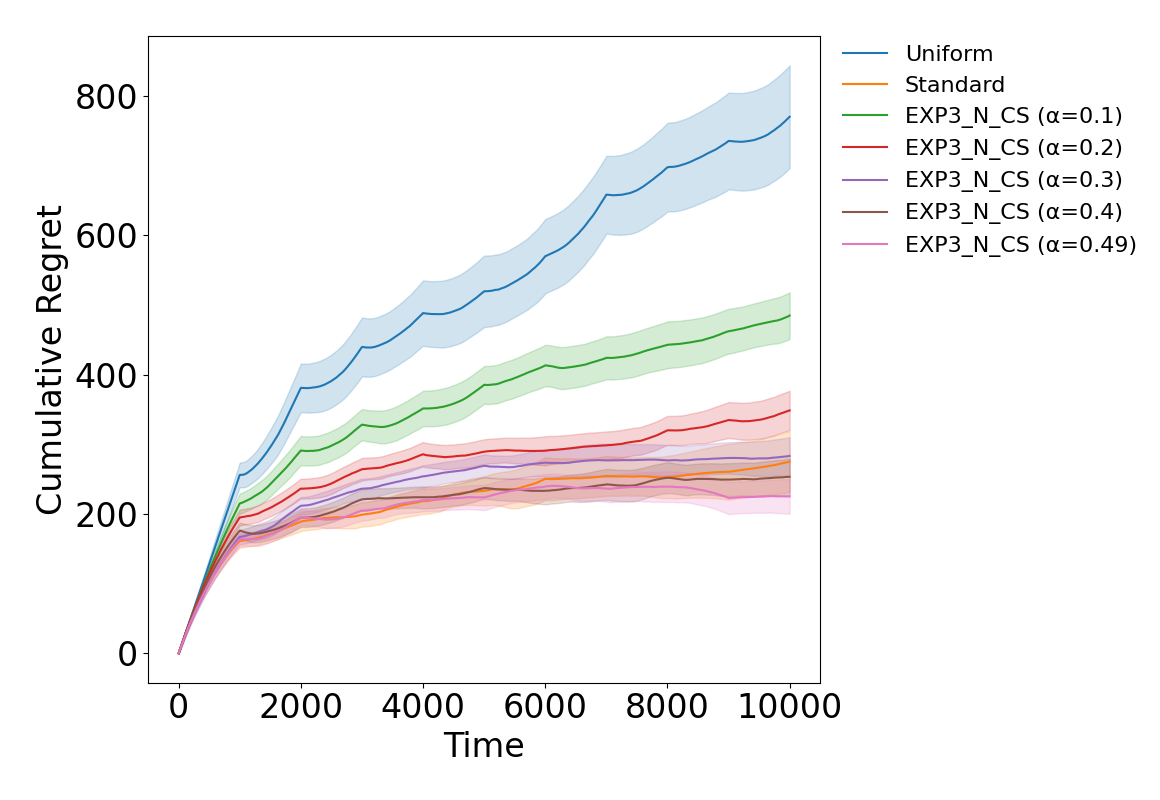}\label{fig4b}}\\
         \subfigure[CS width]{
\includegraphics[width=0.42\linewidth]{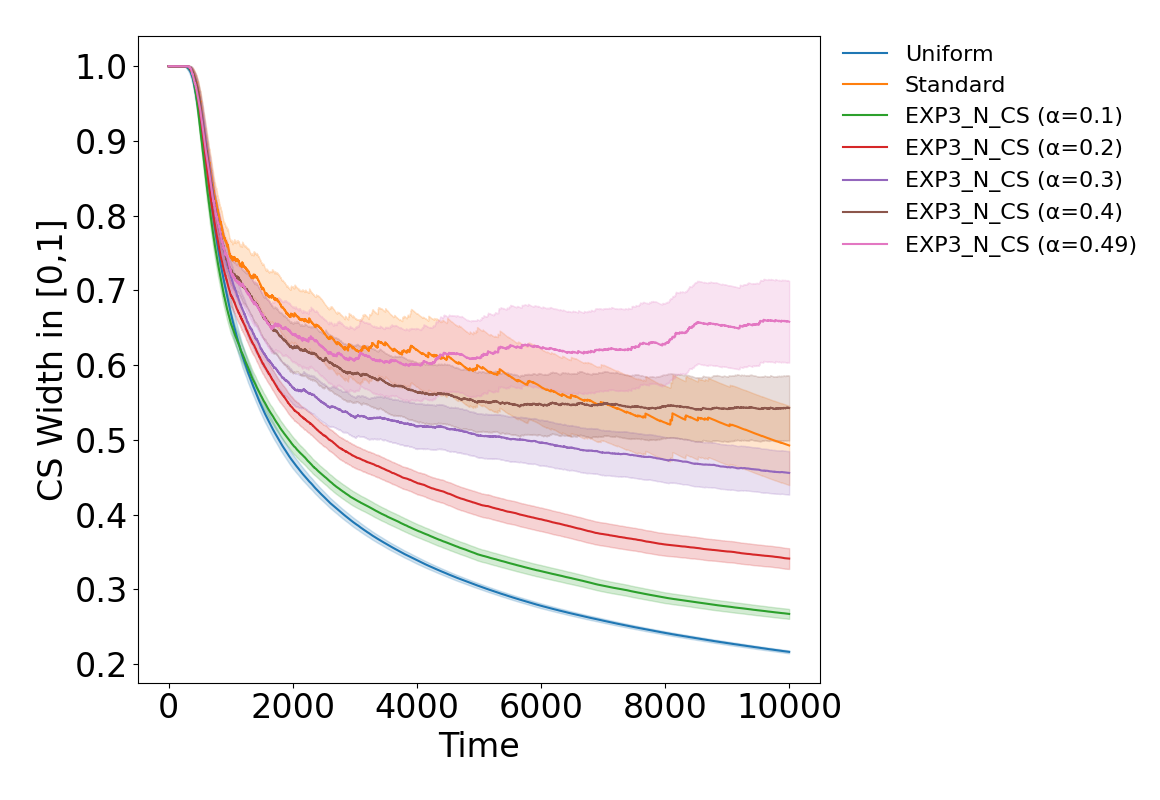}\label{fig4c}}
\subfigure[Maximum ATE estimation error]{
\includegraphics[width=0.45\linewidth]{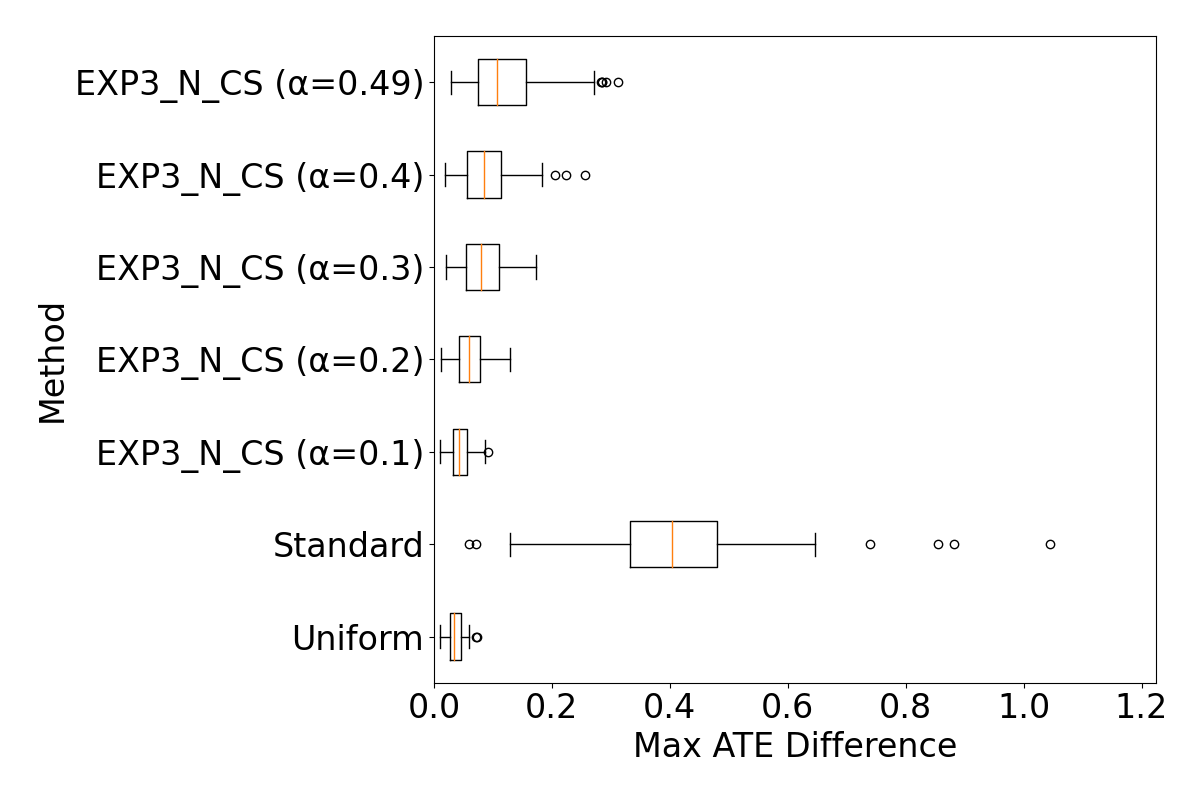}\label{fig4d}}
\caption{Experimental results of instance 2.}\label{fig4}
\end{figure}

\begin{figure}[t]
\centering
 \subfigbottomskip=4pt
	\subfigcapskip=-5pt 
	\subfigure[Network topology]{
\includegraphics[width=0.28\linewidth]{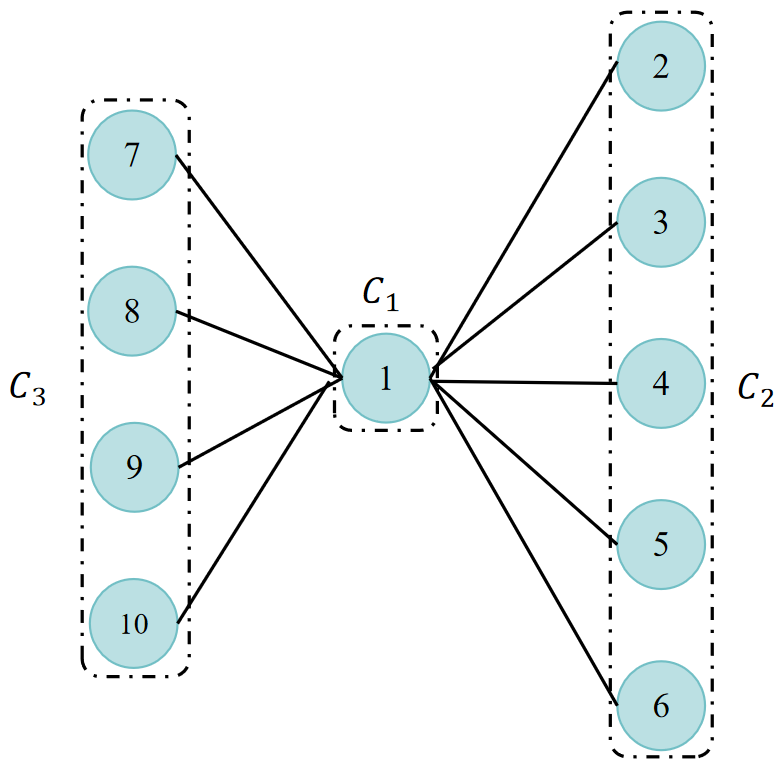}\label{fig5a}}\quad\quad
	\subfigure[Cumulative regret]{
\includegraphics[width=0.42\linewidth]{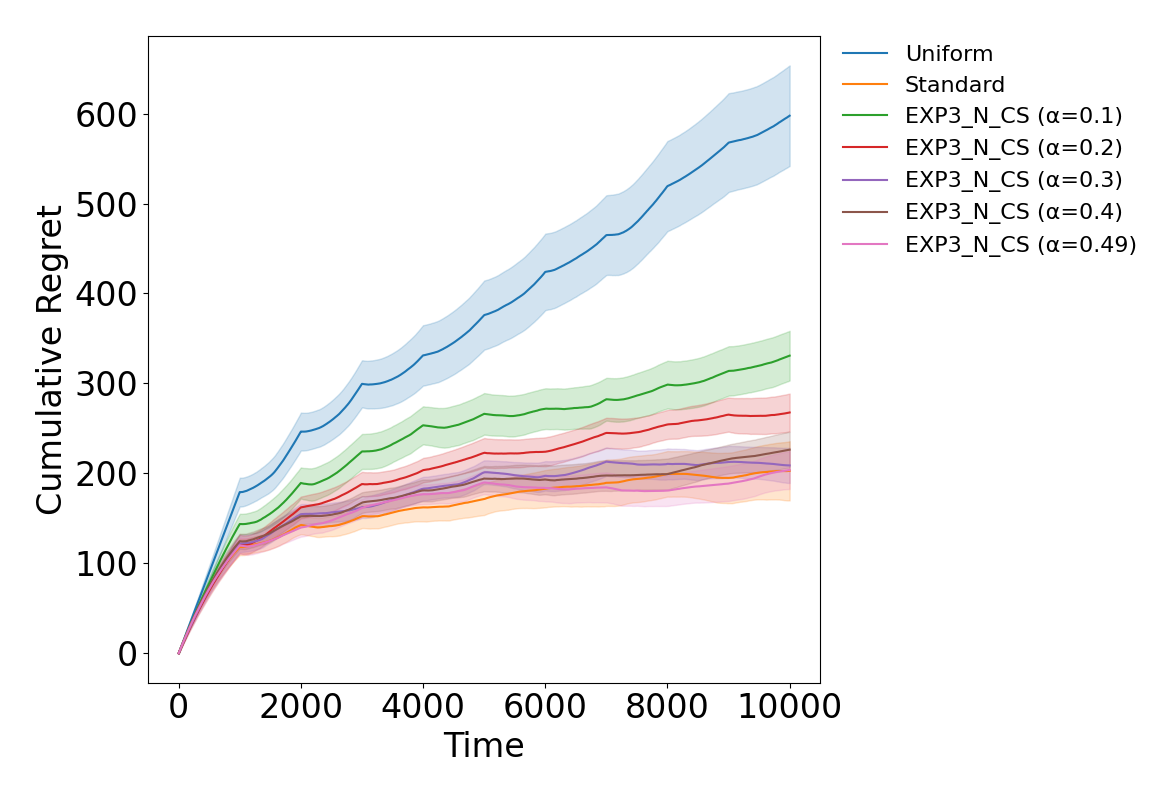}\label{fig5b}}\\
         \subfigure[CS width]{
\includegraphics[width=0.42\linewidth]{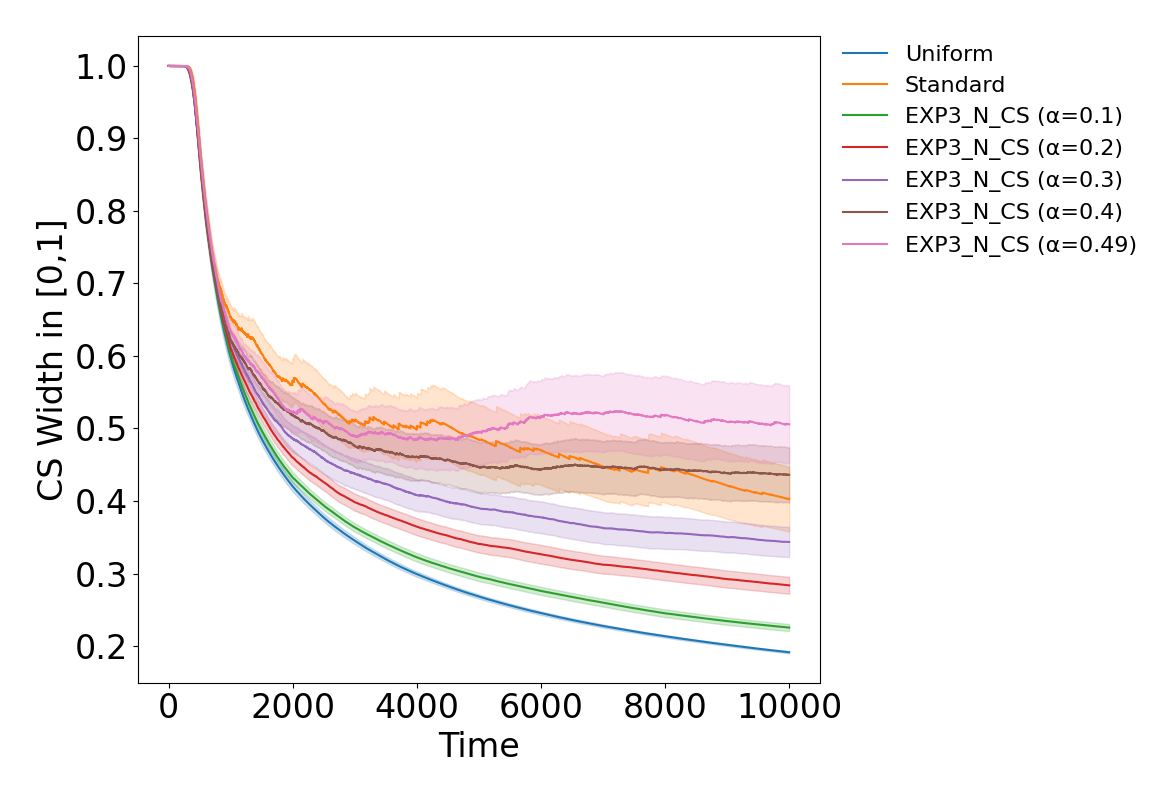}\label{fig5c}}
\subfigure[Maximum ATE estimation error]{
\includegraphics[width=0.45\linewidth]{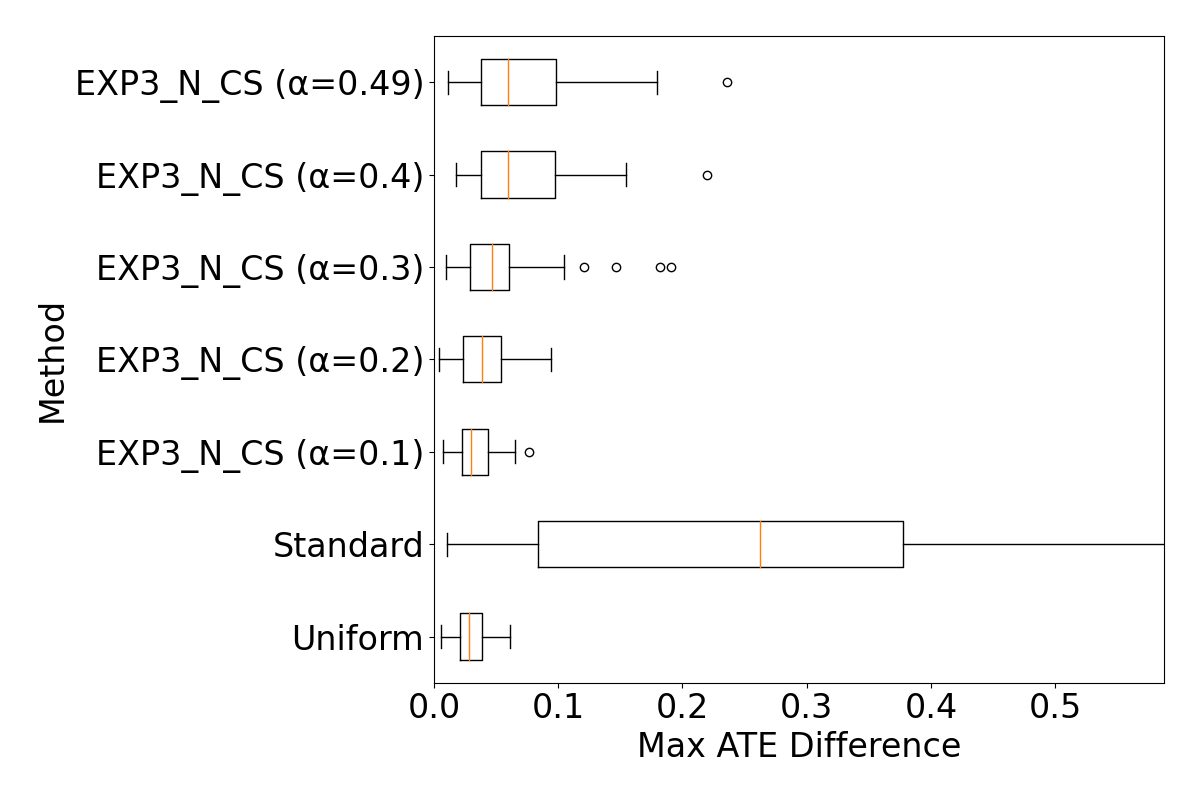}\label{fig5d}}
\caption{Experimental results of instance 3.}\label{fig5}
\end{figure}

\begin{figure}[t]
\centering
 \subfigbottomskip=4pt
	\subfigcapskip=-5pt 
	\subfigure[Network topology]{
\includegraphics[width=0.28\linewidth]{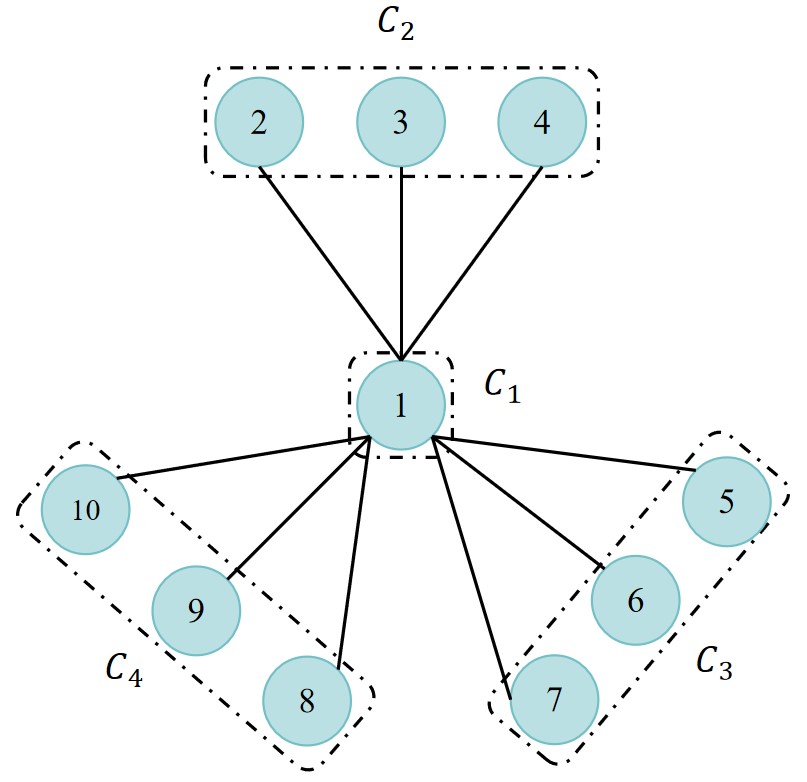}\label{fig6a}}\quad\quad
\subfigure[Cumulative regret]{
\includegraphics[width=0.42\linewidth]{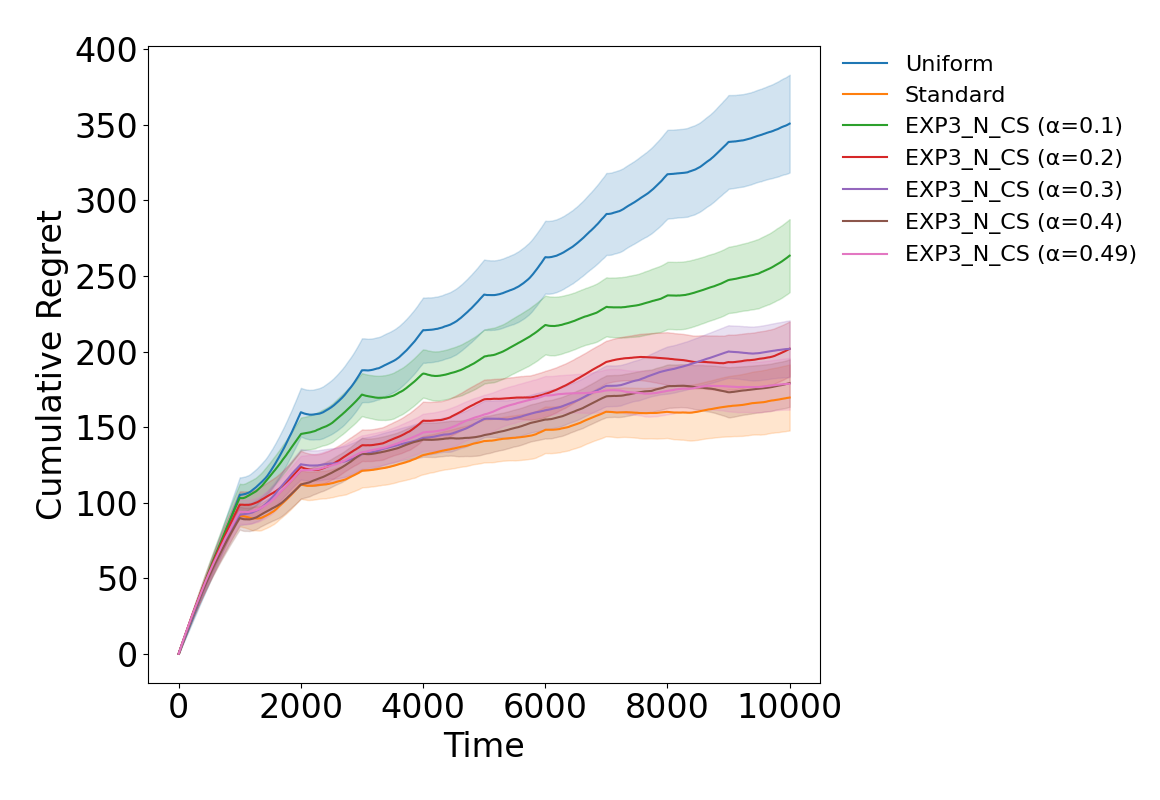}\label{fig6b}}\\
         \subfigure[CS width]{
\includegraphics[width=0.42\linewidth]{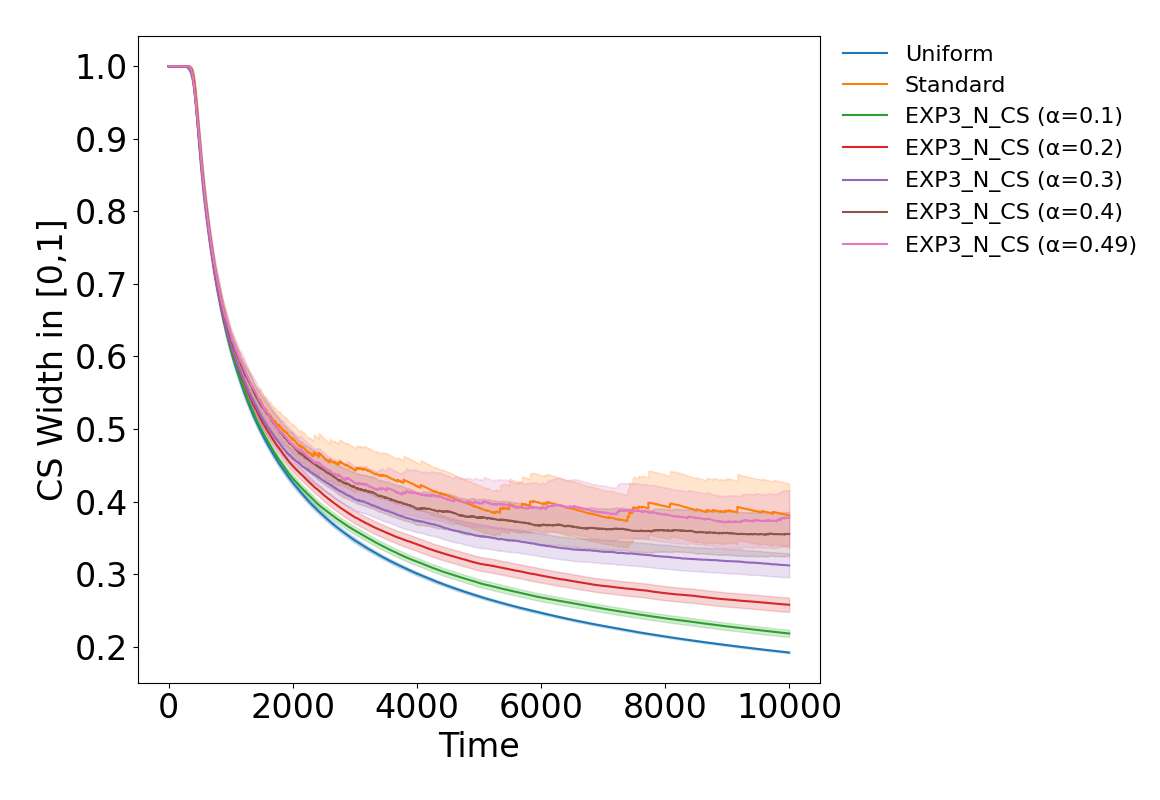}\label{fig6c}}\quad
\subfigure[Maximum ATE estimation error]{
\includegraphics[width=0.45\linewidth]{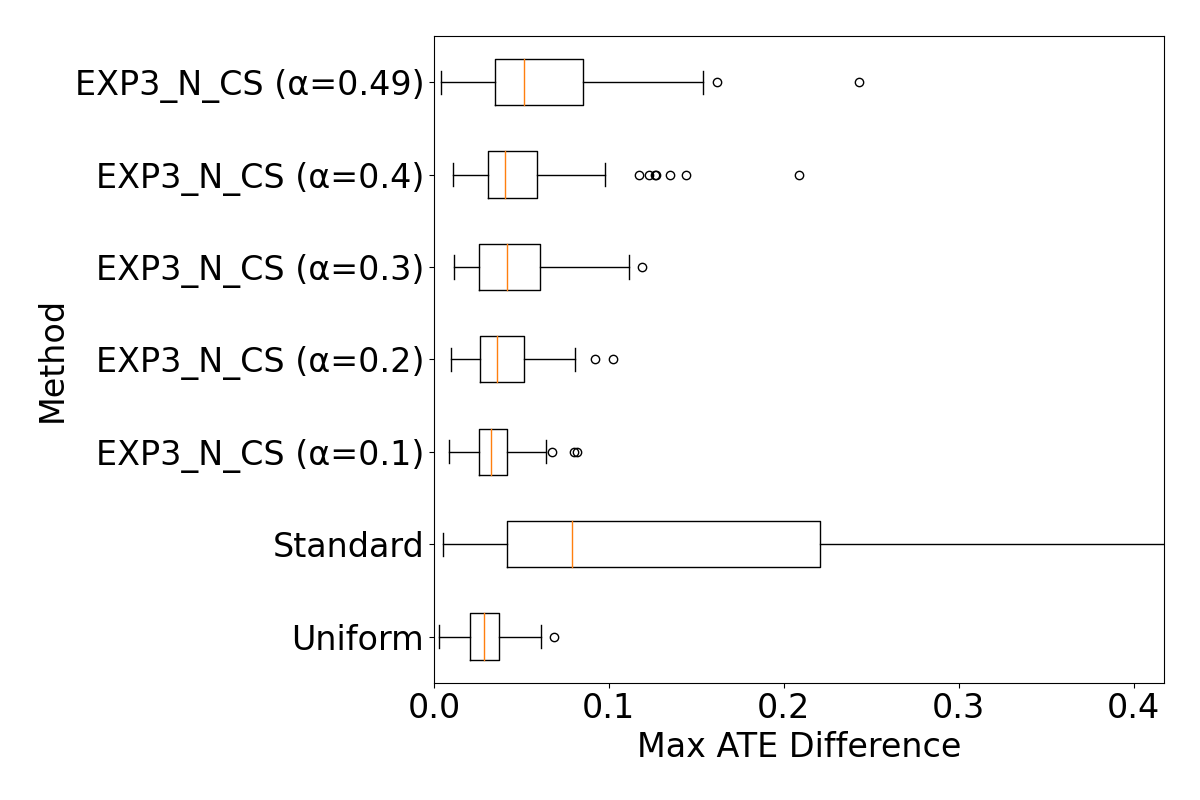}\label{fig6d}}
\caption{Experimental results of instance 4.}\label{fig6}
\end{figure}

\section{Additional Experimental Results}\label{AExperiment}

In this section, we present four additional experiment instances along with the corresponding results.

\paragraph{Instance 1: single unit.} In this setup, the network consists of a single unit (as illustrated in Fig. \ref{fig3a}), making it identical to the case considered in \citet{liang2023experimental}. Additionally, the action set is defined to include five actions, i.e., $\A = \{0, \dots, 4\}$. The reward structure and baseline algorithms are configured in the same manner as described in Section \ref{experiments}.

\paragraph{Instance 2: 6 units.} 
The network in this setup consists of 6 units, organized in a loop topology, as illustrated in Fig. \ref{fig4a}. Furthermore, the network is divided into three clusters, with the cluster structure also depicted in Fig. \ref{fig4a}. The configuration of the action set, reward structure, and baseline algorithms follows the same setup as described in Section \ref{experiments}.

\paragraph{Instance 3: 10 units case 1} 
In this setup, the network consists of 10 units, with the topology structure depicted in Fig. \ref{fig5a}. Additionally, we divide the network into three clusters, and the cluster structure is also illustrated in Fig. \ref{fig5a}. The configuration of the action set, reward structure, and baseline algorithms remains the same as described in Section \ref{experiments}.

\paragraph{Instance 4: 10 units case 2}
The network consists of 10 units arranged in a star-like topology, as shown in Figure~\ref{fig6a}. At the center of the network lies a single unit forming the central cluster, which is directly connected to every unit in three outer clusters. Each of these outer clusters comprises 3 units, resulting in a total of 9 peripheral units connected to the central cluster. The configuration of the action set, reward structure, and baseline algorithms remains the same as described in Section \ref{experiments}.

We ran the algorithms 100 times and reported the average results.

The experimental results are presented in Fig. \ref{fig3}, \ref{fig4}, \ref{fig5} and \ref{fig6}. For cumulative regret, both the Standard approach and \texttt{EXP3-N-CS} with larger $\alpha$ values achieve the lowest regret, while the Uniform baseline incurs the highest regret. 
For continual inference, although Standard exhibits narrower CS widths than some \texttt{EXP3-N-CS} variants, its intervals are invalid due to the lack of theoretical guarantees.
This issue is reflected in the maximum ATE estimation error, where Standard exhibits the largest errors with many outliers. 
In contrast, \texttt{EXP3-N-CS} with moderate or large $\alpha$ and the \textit{Uniform} baseline achieve lower estimation errors.

\end{document}